\newcolumntype{?}{!{\vrule width 1pt}}
\newcolumntype{C}[1]{>{\centering}m{#1}}
\newcolumntype{X}{@{\hskip\tabcolsep\vrule width 1.5pt\hskip\tabcolsep}}
\newcommand{\myfigurefivecol}[1]{
\begin{minipage}[b]{.08\textwidth}
\includegraphics[width=1.15\linewidth]{#1}
\end{minipage}
}
\newcommand{\myfigurebigfivecol}[1]{
\begin{minipage}[b]{.17\textwidth}
\includegraphics[width=1.055\linewidth]{#1}
\end{minipage}
}
\newcommand{\myfigurefivecolcaption}[2]{
\begin{minipage}[b]{.17\textwidth}
\includegraphics[width=1.055\linewidth]{#1}
%\vspace{-0.8cm}
\caption{{\small {#2}}}
%\caption{Test}
\end{minipage}
}
\newcommand{\myfiguresmallfivecolcaption}[2]{
\begin{minipage}[b]{.08\textwidth}
\includegraphics[width=1.15\linewidth]{#1}
%\vspace{-0.8cm}
\caption{{\scriptsize {#2}}}
%\caption{Test}
\end{minipage}
}
\newcommand{\myfiguresixcol}[1]{
\begin{minipage}[b]{.14\textwidth}
\includegraphics[width=1.085\linewidth]{#1}
\end{minipage}
}
\theoremstyle:=definition,remark,plain\do{%
        \expandafter\g@addto@macro\csname th@\theoremstyle\endcsname{%
            \addtolength\thm@preskip\parskip
            }%
        }
\newtheorem{thm}{Theorem}[section]
\newtheorem{mydef}[thm]{Definition}
\newcommand{\E}{\mathbb{E}}
\newcommand{\vx}{\boldsymbol{x}}
\newcommand{\vv}[1]{\boldsymbol{#1}}
\newcommand{\prob}{\mathrm{Pr}}
\DeclareMathOperator*{\argmax}{arg\,max}
\begin{document}

% If your paper is accepted and the title of your paper is very long,
% the style will print as headings an error message. Use the following
% command to supply a shorter title of your paper so that it can be
% used as headings.
%
%\runningtitle{I use this title instead because the last one was very long}

% If your paper is accepted and the number of authors is large, the
% style will print as headings an error message. Use the following
% command to supply a shorter version of the authors names so that
% they can be used as headings (for example, use only the surnames)
%
%\runningauthor{Surname 1, Surname 2, Surname 3, ...., Surname n}

\twocolumn[
\aistatstitle{Local Perturb-and-MAP for Structured Prediction}

%\aistatsauthor{ Anonymous Author 1 \And Anonymous Author 2 \And Anonymous Author 3 }
%\aistatsaddress{ Unknown Institution 1 \And Unknown Institution 2 \And Unknown Institution 3 }

\aistatsauthor{Gedas Bertasius \And Qiang Liu \And Lorenzo Torresani \And Jianbo Shi}
\aistatsaddress{ University of Pennsylvania \And Dartmouth College \And Dartmouth College \And University of Pennsylvania }

\vskip 0.3in
]

\begin{abstract} 

Conditional random fields (CRFs) provide a powerful tool for structured prediction, but cast significant challenges in both the learning and inference steps. Approximation techniques are widely used in both steps, which should be considered jointly to guarantee good performance (a.k.a. ``inferning"). 
%Two fundamental problems in the context of probabilistic graphical models are learning and inference. Many traditional probabilistic methods resort to approximations in either learning, inference, or even both steps due to their large complexity cost. 
%This leads to algorithms where the learning and inference steps are disjoint, which often degrades algorithm performance. 
Perturb-and-MAP models provide a promising alternative to CRFs, but require global combinatorial optimization and hence they are usable only on specific models. %and not widely usable. % causes a difficult training process. 
In this work, we present a new Local Perturb-and-MAP (locPMAP) framework that  replaces the global optimization with a local optimization by exploiting our observed connection between locPMAP and the pseudolikelihood of the original CRF model. 
%Interestingly, we show that the pseudolikelihood of the original CRF model
%can be interpreted as a partial information likelihood of our locPMAP model, leading to a novel perspective for understanding and using pseudolikelihood. 
%we show that the likelihood of our locPMAP model equals the pseudolikelihood of the original CRF model, leading to a novel perspective for understanding and using pseudolikelihood. 
%In particular,  this suggests that our locPMAP inference procedure forms a natural ``inferning'' counterpart for the pseudolikelihood learning objective, and should provide much better predictions than the other typical approximation inference procedures. % the use of typical approximations in the inference. 
%
We test our approach on three different vision tasks
%Additionally, we demonstrate how to apply our proposed scheme for end-to-end training of a deep structured network. Finally, our framework gives a novel interpretation for pseudolikelihood, enabling various powerful extensions, such as the use of pseudolikelihood under the Bayesian framework.
%We show that the consistency between learning and inference of our method yields better results than those achieved using approximate inference techniques, on all of these tasks.
%
%based on the same parameters learnt by PL (but with different interpretations).  
%on three different vision tasks.
%Additionally, we also demonstrate that we can integrate our method into fully convolutional network framework to achieve even better performance at certain tasks \red{a bit more here?}. 
%In addition, we 
and show that our method achieves consistently improved performance over other approximate inference techniques optimized to a pseudolikelihood objective.  Additionally, we demonstrate that we can integrate our method in the fully convolutional network framework to increases our model's complexity. Finally, our observed connection between locPMAP and the pseudolikelihood leads to a novel perspective for understanding and using pseudolikelihood. 

\end{abstract} 

\section{Introduction}
Probabilistic graphical models, such as Markov random fields, and conditional random fields \citep{Lafferty:2001:CRF:645530.655813} 
 provide a powerful framework for solving challenging learning problems that require structured output prediction~\citep{lauritzen1996graphical}. The use of graphical models consists of two main steps: \emph{learning}, which estimates the model parameters from the data, as well as \emph{inference}, which makes predictions based on the learned model.
%preferably together with an uncertainty measure with calibrated probability. 
%providing both predictions and uncertainty measures. 

%which makes predictions or estimations based on the learnt model. 
Unfortunately, both maximum likelihood learning and probabilistic inference involve calculating the normalization constant (i.e. a partition function), which is intractable to compute in general. 
In practice, approximation methods, such as variational inference and MCMC, are widely used for inference and learning. There also exist other consistent learning methods such as the maximum pseudolikelihood (PL) estimator~\cite{Besag1975} %which provides computationally tractable learning with
which does not require calculating likelihood and is computationally tractable. 
% estimate the parameters consistently. 
%

 It is well known that there are strong interactions between learning and inference. As a result, the choice of the learning and inference algorithms should be considered jointly, an idea which is referred to as ``inferning".\footnote{see \url{http://inferning.cs.umass.edu}.} 
%\red{[]}
 Although it is relatively easy to identify ``inferning" pairs when using variational or MCMC approximations, it is unclear what the natural inference counterpart of pseudolikelihood (PL) is. For instance,
even if PL learning estimates a true model, a poor choice of a subsequent approximate inference algorithm may deteriorate the overall prediction accuracy.  

An alternative way to achieve ``inferning'' is to employ models that are computationally more tractable. One such framework is the \emph{Perturb-and-MAP} model \citep{conf/iccv/PapandreouY11, journals/jmlr/TarlowAZ12, NIPS2013_5066}
which involves injecting noise into the log-probability (the potential function),
and generating random samples to find the global maximum, i.e., the maximum \emph{a posteriori} (MAP) estimate of the perturbed probability.   
%it injects noises into the log-probability (the potential function), and finding the MAP (the global maximization) of the perturbed probability. 
% is one such promising alternative model with several attractive properties \citep{conf/iccv/PapandreouY11, journals/jmlr/TarlowAZ12, NIPS2013_5066}.  
%It is motivated by the observation that many practical graphical models it is easier to solve the discrete optimization, finding the maximum \emph{a posteriori}, than drawing samples or calculating the partition function. 
%In contrast with typical models that directly draw sample with the give probability, it perturbs the log-probability (potential function), and finding the MAP (the global maximization) of the perturbed probability. 
These models have a sound probabilistic interpretation, which can be exploited to make predictions with an uncertainty measure. Another benefit is that models from the  \emph{Perturb-and-MAP} class require combinatorial optimization, which is easier to solve than in the case of probabilistic inference models, which instead require marginalizing over variables or drawing MCMC samples. 
% MCMC; there 
%marginalization inference.  

%solvers that can sometimes solved exactly   much more efficient  than typical probabilistic inference that requires MCMC 
%marginalization inference.  
%and inherent the advantage of,
%being able to bypass the costly marginal inference step of typical graphical models. 
 %generate sample by finding the global 
%The Perturb-and-MAP model are new models that injecting noise to the energy potentials, followed by finding the global minimum of the perturbed energy functions.  
%These models have the advantage 
%is based on the idea that  %\citep{conf/iccv/PapandreouY11, }

Unfortunately, %there remains two some major challenges in implementing the Perturb-and-MAP framework. First, 
despite being easier than marginalization, MAP estimation still requires a global optimization that is generally NP-hard. Thus, this prevents the use of Perturb-and-MAP for generic graphical models. 
In addition, the Perturb-and-MAP model can be viewed as a hidden variable model with deterministic constraints, and its training casts another challenging learning task that involves maximizing a non-convex likelihood function. This is often solved using  variants of EM, combined with approximation schemes \citep{DBLP:conf/aistats/GaneHJ14, journals/jmlr/TarlowAZ12}. 

In this work, we propose ``Local Perturb-and-MAP'' (locPMAP), a model that only requires finding a local maximum of the perturbed potential function, which is much easier than global optimization required for Global-MAP methods. 
%
%In this work, 
Our locPMAP model has a close connection with the classical pseudolikelihood, 
in that pseudolikelihood can be interpreted as a type of partial information likelihood of our locPMAP model. 
This motivates us to decode pseudolikelihood training with a locPMAP inference procedure. 
We test our approach on three different vision tasks, where we show that locPMAP applied to models learned with PL yields consistently better inference results than those achieved using other approximate inference techniques.

In addition, we demonstrate that we can integrate our method in the fully convolutional network framework~\cite{long_shelhamer_fcn} to address the small complexity limitation of log-linear CRF models and improve performance on challenging structured prediction problems. 
Finally, our approach provides a novel view for pseudolikelihood and opens up opportunities for many useful extensions.

%\red{In this work, we avoid approximations and address the problem of inconsistent learning and inference by presenting a novel local Perturb-and-MAP (locPMAP) framework for structured prediction. Our method is based on local optimization over randomly perturbed potential functions. We formulate our model in such a way that the pseudolikelihood (PL) objective~\cite{Besag1975} becomes the true likelihood of our model. We then propose an inference method that is consistent with this specific learning objective. }

\section{Related Work}
The idea of Perturb-and-MAP was motivated by the classical ``Gumbel-Max trick" 
that connects the logistic function with discrete choice theory \citep{mcfadden1973conditional, YELLOTT1977109}. 
It was first applied to graphical model settings by \citet{conf/iccv/PapandreouY11} and \citet{DBLP:conf/icml/HazanJ12}. 
These studies gave rise to a rich line of research \citep[see e.g.,][]{NIPS2013_5066, DBLP:conf/aistats/GaneHJ14, journals/jmlr/TarlowAZ12}. 
We remark that these methods all require global optimization, in contrast with the local optimization in our method. 
%Learning and inference
%In much of the current work on PGMs, there is a tendency to focus on the learning task while ignoring the inference task. For instance, many learning algorithms reviewed in~\cite{journals/jmlr/Wainwright06} are inconsistent with the subsequent inference step. Similarly, there exist inference methods that ignore how the learning algorithm may affect that particular inference scheme~\cite{Wainwright:2008:GME:1498840.1498841}. 
%The idea of consistent learning and inference (``inferning''), is pretty novel and relatively unexplored. Many traditional methods rely on approximate learning and inference steps, that are not explicitly designed to work well together~\cite{journals/jmlr/Wainwright06}. In some cases, these methods may work fine, but often the disjoint learning and inference steps reduce the quality of the performance.

The idea of ``inferning,'' which enforces the consistency between learning and inference, was probably first discussed by 
~\citet{journals/jmlr/Wainwright06}, who showed that when exact inference is intractable, it is better to learn a ``wrong" model to compensate the errors made by the subsequent approximate inference methods. 
Empirical analysis of influence  of learning and inference procedures can also be found in \citet{journals/ml/SuttonM09, gelfand2014bottom, mismatch}.
A line of  work has been developed to explicitly tune parameters in approximate inference procedures \citep{meshi2010learning, stoyanov2012minimum, domke2013learning}. 
In addition, 
\citet{Srivastava_fastinference} proposed an approximate inference method that interacts with learning in order to train Deep Boltzman Machines more efficiently. 
%Our work is closer to 
It is also relevant to mention \citep{poon2011sum}, which provides another class of models that enables efficient inference. 

\section{Background}
%Our proposed method relies heavily on the ideas related to the CRF model for structured prediction and the properties of Gumbel distribution. 
In subsection~\ref{crf_sec}, we introduce some background information on conditional random fields (CRFs). Additionally, in subsection~\ref{gumbel_sec}, we present some key ideas related to the Gumbel-Max trick and Perturb-and-MAP. We will use all of these ideas to introduce our method in Section~\ref{method_sec}.
%, we briefly review some background information that will play an essential role when we develop our method in Section~\ref{method_sec}.

\subsection{Structured Prediction with CRFs}
\label{crf_sec}

CRFs~\cite{Lafferty:2001:CRF:645530.655813} provide a framework to solve challenging structured prediction problems. 
Let $\vv x$ be an input (e.g., an image), and $\vv y \in \mathcal Y $ a set of structured labels (e.g., a semantic segmentation). A CRF assumes that the labels $\vv y$ are drawn from an exponential family distribution% following distribution.

\begin{equation} \label{eq:crf_prob}
p(\vv y | \vv x; ~w) = \frac{1}{Z(\theta)}\exp(\theta(\vv y, \vv x, w))
\end{equation}

where $\theta$ is a potential function and $w$ are model parameters that need to be estimated from data; the normalization constant $Z(\vv x, w) = \sum_{\vv y} \exp(\theta(\vv y, \vv x, w)$ is difficult to compute unless the corresponding graph is tree-structured.

%More details~\cite{MAL-013}.

Now let us assume that we are given a set of labeled training examples $\{\vx^i, \vv y^i \}$. A typical maximum likelihood estimator learns parameters $w$ by maximizing the log likelihood function:
\begin{align}
\hat w  = \argmax_{w} \sum_i \log p(\vv y^i | \vv x^i; w). \label{equ:MLEcrf061015}
\end{align}

With the estimated parameters $\hat w$  we can then make predictions for a new testing image $\vv x^*$: 
\begin{align}
\hat {\vv y} = \argmax_{\vv y}   p(\vv y | \vv x^*; \hat w) = \argmax_{\vv y}  \theta(\vv y , \vv x^*; \hat w).  \label{equ:MAPcrf061015}
\end{align}
However, both the learning and inference steps in Equations \eqref{equ:MLEcrf061015}-\eqref{equ:MAPcrf061015} are computationally intractable for general loopy graphs. Instead, a popular computationally-efficient alternative for MLE is the pseduolikelihood (PL) estimator~\citep{Besag1975}, defined as:

\begin{equation} \label{eq:pl}
\hat w  = \argmax_{w} \sum_i \sum_j  \log p(y^i_j | \vv x^i,  \vv y_{\neg j}^i; w), 
\end{equation}

where $\neg j$ refers to the neighborhood of the nodes in the graph that are connected to the node $j$. Based on this formulation, each conditional likelihood does not involve $Z$, and can be calculated efficiently. 
\citet{Besag1975} showed that PL is an asymptotically consistent estimator, meaning that $\hat w$  approaches the true parameter $w$ as the size of the dataset is increased. 
% large datasets are available.

However, even if PL estimates the true parameters perfectly, the prediction step in \eqref{equ:MAPcrf061015} still requires approximation. Iterated Conditional Modes (ICM) is one of the simplest inference algorithm that returns a local maximum from the neighborhood of nodes. Other widely used inference techniques include loopy belief propagation (LBP), the mean field algorithm (MF), and Gibbs sampling.

The problem with using these approximate inference algorithms is that they may not work well together with the PL learning algorithm. 
This is because learning and inference are performed disjointly without considering how one may affect the other. 
%\red{[]}
Much of the difficulty comes from the fact that the definition of PL in \eqref{eq:pl} is not ``generative'', since the $\vv y^i$ depend on each other in a loopy fashion.  
\subsection{Perturb-and-MAP}
\label{gumbel_sec}
%The optimization based
%The Perturb-and-MAP models defines a set of non-exponential family distributions by perturbing the
In contrast with the CRF defined in \eqref{eq:crf_prob}, 
the Perturb-and-MAP model \citep{DBLP:conf/icml/HazanJ12,conf/iccv/PapandreouY11} considers distributions of the form 
%deterministic approximation inference 
\begin{equation} \label{eq:gumbel_max}
%\prob[\hat{x} \in \argmax\{\theta(x)+\epsilon(x)\}] = \frac{\exp(\theta(x))}{Z(\theta)}
\prob[\vv y \in \argmax\{\theta(\vv y)+\epsilon(\vv y)\}],~~~~ \epsilon \sim q  %= \frac{\exp(\theta(y))}{\sum_{y} \exp(\theta(y)))}, 
\end{equation} 
where we dropped the dependency on $\vx$ to simplify the notation. That is, we first perturb the potential function $\theta(\vv y)$ with random noise $\epsilon(\vv y)$ from distribution $q$ and then draw the sample by finding the maximum point $\vv y$. 
Consider special perturbation noise $\epsilon(\vv y)$ drawn i.i.d. from the zero mean Gumbel distribution with cumulative distribution function $F(t) = \exp(- \exp( - (t + c))$, where $c$ is the Euler constant.
The Gumbel-Max trick \citep{YELLOTT1977109, mcfadden1973conditional} shows that the Perturb-and-MAP model is then equivalent to the distribution in \eqref{eq:crf_prob}, that is, 
\begin{equation} \label{eq:gumbel_max}
%\prob[\hat{x} \in \argmax\{\theta(x)+\epsilon(x)\}] = \frac{\exp(\theta(x))}{Z(\theta)}
\prob[\vv y \in \argmax\{\theta(\vv y)+\epsilon(\vv y)\}] = \frac{\exp(\theta(\vv y))}{\sum_{\vv y} \exp(\theta(\vv y)))}. 
\end{equation}
 This connection provides a basic justification for Perturb-and-MAP models. It is also possible to use more general perturbations beyond the Gumbel perturbation, 
but then the training of the Perturb-and-MAP model becomes substantially more difficult, requiring EM-type non-convex optimization with Monte Carlo or other approximations 
\citep{journals/jmlr/TarlowAZ12, NIPS2013_5066, DBLP:conf/aistats/GaneHJ14}.

\section{Local Perturb-and-MAP Optimization}
\label{method_sec}

A major limitation of Perturb-and-MAP, even when using Gumbel noise, is that it requires global optimization over the perturbed potentials. 
We address this problem by replacing the global optimum with a local optimum. 
We start by defining the notion of local optimality. 
%efficient loca

%Let us consider the probabilistic model with the real valued potentials $\theta(y_1, \ldots, y_p)$ and a probability distribution $p(y_1,\hdots,y_p) = \frac{1}{Z(\theta)} \exp( \theta(y_1,\hdots,y_p))$ (as defined in Section~\ref{gumbel_sec}, Equation~\ref{eq:prob_joint}).

\renewcommand{\L}{\mathrm{Loc}}%\mathrm{argloco}}
\begin{mydef}
Let $\mathcal{B} = \{ \beta_k\}$ be a set of non-overlapping sets of variable indices such that $\beta_k \cap \beta_l = \emptyset $ for $\forall ~ k \neq l$.  Then we say that 
$$
\vv y \in \L \big[\theta(\vv y) ; ~ {\mathcal{B}} \big] 
$$
if $\vv y_{\beta} \in \argmax_{\vv y'_\beta} \big[ \theta(\vv y'_\beta, \vv y_{\neg \beta})\big]$ for $\forall \beta \in \mathcal{B}$, where $\neg \beta = [p] \setminus \beta$. This implies that 
$[\vv y_{\beta}, \vv y_{\neg \beta} ]$ is no worse than $[\vv y_{\beta}', \vv y_{\neg\beta}]$ for any $\vv y_{\beta}' \in Y_{\beta}$, $\beta \in \mathcal{B}$. In other words, $\vv y$ is a block-coordinate-wise maximum of $\theta(\vv y)$ on the set $\mathcal{B}$.  
\end{mydef}

We are now ready to establish our main result. We show that by exploiting random Gumbel perturbations over the potential functions, we can formulate a Local Perturb-and-MAP model, which yields a close connection with pseudolikelihood. 
%for which the pseudolikelihood function becomes the true likelihood.

%which shows that the likelihood of the local minima of potential functions perturbed with Gumbel noise equals to the pseudolikelihood of the original model. 

\begin{thm} \label{thm:perturbicm061015}
%Let $\mathcal{B} = \{\beta_k\}$ be a set of non-overlapping sets of variables indexes ($\cup_k \beta_k \subseteq  [p]$  and $\beta_k \cap \beta_l = \emptyset $ for $\forall ~ k \neq l$).   
Let us now perturb $\theta(\vv y)$ to get $\tilde\theta(\vv y) = \theta(\vv y) +\sum_{\beta\in \mathcal B } \epsilon(\vv y_{\beta})$, 
%$$\vx \in greedy\bigg[\theta(\vx) + \sum_{\beta \in \mathcal{B}} \epsilon_{\beta}(\vx_{\beta}) ~; ~~ {\mathcal{B}} \bigg]$$
where each element $\epsilon(\vv y_{\beta})$ is drawn i.i.d. from a Gumbel distribution with CDF $F(t) =\exp(-\exp(-(t+c)))$, where $c$ is the Euler constant. 
Then we have, \text{ $\forall \vv y$},  %claim that:
%means that $[\vx_{\beta}, \vx_{\neg \beta} ]$ is  not worse than any $[\vx_{\beta}', \vx_{\neg\beta}]$ for any $\vx_{\beta}' \in \X_{\beta}$, $\beta \in \mathcal{B}$.  Then 
$$
\prob\bigg ( \vv y \in  \L\big[\tilde\theta(\vv y)  ; ~ {\mathcal{B}} \big]\bigg)= \prod_{\beta \in \mathcal{B}} p(  \vv y_{\beta} | \vv y_{\neg \beta} ; ~ \theta). 
$$
Note that the right hand side has a form of composite likelihood \citep{lindsay1988composite}, 
which reduces to the pseudolikelihood when taking $\mathcal B = \{k\colon k\in [n]\}$. 
\end{thm}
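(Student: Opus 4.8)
The plan is to decompose the joint event defining local optimality into one event per block, show those events are mutually independent, and then recognize each of them as an ordinary (unconditional) Perturb-and-MAP event on that block so that the Gumbel-Max identity \eqref{eq:gumbel_max} applies verbatim. Concretely, first unfold the definition: $\vv y\in\L[\tilde\theta(\vv y);\mathcal B]$ holds precisely when, for every $\beta\in\mathcal B$, $\vv y_\beta\in\argmax_{\vv y'_\beta}\tilde\theta(\vv y'_\beta,\vv y_{\neg\beta})$. Expanding the perturbed potential, $\tilde\theta(\vv y'_\beta,\vv y_{\neg\beta})=\theta(\vv y'_\beta,\vv y_{\neg\beta})+\epsilon(\vv y'_\beta)+\sum_{\beta'\in\mathcal B\setminus\{\beta\}}\epsilon(\vv y_{\beta'})$, and here I would invoke the non-overlap hypothesis $\beta\cap\beta'=\emptyset$: each $\vv y_{\beta'}$ with $\beta'\neq\beta$ is a subvector of the frozen block $\vv y_{\neg\beta}$, so the trailing sum is constant in $\vv y'_\beta$ and drops out of the $\argmax$. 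Thus the per-block event $E_\beta:=\{\vv y_\beta\in\argmax_{\vv y'_\beta}\tilde\theta(\vv y'_\beta,\vv y_{\neg\beta})\}$ coincides with $\{\vv y_\beta\in\argmax_{\vv y'_\beta}[\theta(\vv y'_\beta,\vv y_{\neg\beta})+\epsilon(\vv y'_\beta)]\}$, an event that depends only on the Gumbel variables attached to block $\beta$.

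Next I would exploit that the noise families indexed by distinct blocks are disjoint (this again uses non-overlap of the blocks) and the $\epsilon$'s are i.i.d., so the events $\{E_\beta\}_{\beta\in\mathcal B}$ are mutually independent; continuity of the Gumbel law makes the block-$\beta$ maximizer a.s. unique, so ``$\in\argmax$'' is unambiguous off a null set. Hence
\[
\prob\!\Big(\vv y\in\L[\tilde\theta(\vv y);\mathcal B]\Big)=\prob\!\Big(\bigcap_{\beta\in\mathcal B}E_\beta\Big)=\prod_{\beta\in\mathcal B}\prob(E_\beta).
\]

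Then, for fixed $\beta$ and fixed $\vv y_{\neg\beta}$, the event $E_\beta$ is exactly a Perturb-and-MAP draw over the block $\vv y_\beta$ with potential $\vv y'_\beta\mapsto\theta(\vv y'_\beta,\vv y_{\neg\beta})$ perturbed by i.i.d. Gumbel noise, so \eqref{eq:gumbel_max} yields
\[
\prob(E_\beta)=\frac{\exp(\theta(\vv y_\beta,\vv y_{\neg\beta}))}{\sum_{\vv y'_\beta}\exp(\theta(\vv y'_\beta,\vv y_{\neg\beta}))}=p(\vv y_\beta\mid\vv y_{\neg\beta};\theta),
\]
the last equality being just the definition of the conditional of $p(\vv y)\propto\exp(\theta(\vv y))$. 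Multiplying over $\beta$ proves the identity. For the closing remark, specializing to $\mathcal B=\{\{k\}:k\in[n]\}$ turns each factor into $p(y_k\mid\vv y_{\neg k};\theta)$, which by the Markov property equals the conditional on the graph neighbors of $k$, so the product is exactly the pseudolikelihood \eqref{eq:pl}.

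The one place that needs genuine care — rather than routine bookkeeping — is the cancellation-and-independence step: I must check that freezing $\vv y_{\neg\beta}$ really does collapse every cross-block perturbation term to an additive constant inside the block-$\beta$ $\argmax$ (which is exactly where $\beta\cap\beta'=\emptyset$ enters), and that the residual event $E_\beta$ is measurable with respect to a sub-collection of the Gumbel variables that is disjoint across blocks, so that the factorization $\prob(\bigcap_\beta E_\beta)=\prod_\beta\prob(E_\beta)$ is rigorous and not merely plausible. Everything else is a direct appeal to the Gumbel-Max trick already recalled in the Background section.
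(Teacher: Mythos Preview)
Your proposal is correct and follows essentially the same route as the paper: decompose the local-optimality event as $\bigcap_{\beta\in\mathcal B}E_\beta$ (the paper writes $A_\beta$), invoke independence of the block-indexed Gumbel noise to factorize, and apply the Gumbel-Max identity \eqref{eq:gumbel_max} to each factor. You are somewhat more explicit than the paper about why the cross-block perturbations $\epsilon(\vv y_{\beta'})$ drop out of the block-$\beta$ $\argmax$ and why the events $E_\beta$ are measurable with respect to disjoint noise families, but these are elaborations of the same argument rather than a different one.
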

\begin{proof} 
Note that based on our definition of $\L()$ we can write $\L\big[\tilde\theta(\vv y)  ;  {\mathcal{B}} \big] = \cap_{\beta\in \mathcal{B}} A_{\beta}$, where 
$A_{\beta} = \{\vv y \colon  \vv y_{\beta} \in \argmax_{\vv y_{\beta}}\big[ \tilde \theta(\vv y_{\beta}, ~ \vv y_{\neg \beta}) \big] \big\}.$ Then, we can write:
\begin{align*}
\prob(\vv y \in  A_{\beta}  )
& = \prob (  \vv y_{\beta} \in \argmax_{ \vv y_{\beta}} \big [\theta(\vv y_{\beta}, ~ \vv y_{\neg \beta})  + \epsilon (\vv y_{\beta})  \big]  ) \\
%& = \prob (  \vv y_{\beta} \in \argmax_{ \vv y_{\beta}} \big [\tilde \theta(\vv y_{\beta}, ~ \vv y_{\neg \beta})  \big]  ) \\
& = \frac{\exp( \theta(\vv y_\beta, \vv y_{\neg\beta}))}{\sum_{\vv y'_{\beta}}   \exp( \theta(\vv y'_\beta, \vv y_{\neg\beta}) }  \\
& = p(\vv y_{\beta} | \vv y_{\neg \beta}; \theta), 
\end{align*}
where we use Equation~\ref{eq:gumbel_max} to derive these equalities. Note that Equation~\ref{eq:gumbel_max} results from the application of the Gumbel-Max trick. In the context of our problem, this equation holds because  $\epsilon(\vv y_{\beta})$ are drawn i.i.d. from a zero mean and a unit variance Gumbel distribution. Additionally, since  $\epsilon(\vv y_{\beta})$ are drawn independently from each other, the events $[\vv y \in A_\beta]$ are independent too. Therefore, we can write:
\begin{align*}
\prob(\vv y \in  \L\big[\tilde\theta(\vv y)  ; ~ {\mathcal{B}} \big])
& =  \prob( \vv y \in \cap_{\beta \in \mathcal{B}} A_{\beta}) \\ 
& = \prod_{\beta \in \mathcal{B}} p(\vv y_\beta | \vv y_{\neg\beta}; ~\theta). 
\end{align*}
%Therefore $A  = \cup A_{\beta}$ and $A_{\beta}$ are independent with each other. 
\end{proof}
%\red{todo:} How to deal with the case $\beta_k$ overlaps with each other? Hint: start from simple case (three variables)? 
%This may be useful: \url{http://www.stat.yale.edu/~jah49/Maxima} 
%
%
%
%We should point out that o
Our locPMAP model 
defines a procedure for generating \emph{random subsets} 
$\L\big[\tilde\theta(\vv y)  ; ~ {\mathcal{B}} \big]$ formed by the local maxima of the random function $\tilde \theta(\vv y)$. 
Theorem~\ref{thm:perturbicm061015} suggests that 
for any given (deterministic) configuration $\vv y$, 
the probability that $\vv y$ is an element of $\L\big[\tilde\theta(\vv y)  ; ~ {\mathcal{B}} \big]$ equals 
%is a local maximum of the randomly perturbed energy function
the composite likelihood 
$\ell_{\mathcal{B}}(\vv y;~\theta)~  \colon\!\!\!\!= \prod_{\beta \in \mathcal{B}} p(  \vv y_{\beta} | \vv y_{\neg \beta} ; ~ \theta)$. 
Here the point $\vv y$ is deterministic, 
while the set $\L\big[\tilde\theta(\vv y)  ; ~ {\mathcal{B}} \big]$ is random, 
similar to the case of confidence intervals in statistics. 
%happens to equal the probability that it is a local maximum of the randomly perturbed energy function. 
%In other words, we also have 
%$$
%\prob\bigg ( \vv y \notin  \L\big[\tilde\theta(\vv y)  ; ~ {\mathcal{B}} \big]\bigg)=  1 - \prod_{\beta \in \mathcal{B}} p(  \vv y_{\beta} | \vv y_{\neg \beta} ; ~ \theta). 
%$$

We should point out that $\ell_{\mathcal{B}}(\vv y; ~\theta)$ is not a properly normalized distribution over $\vv y\in \mathcal Y$, 
because there may be multiple local maxima in each random set $\L\big[\tilde\theta(\vv y)  ; ~ {\mathcal{B}} \big]$. 
In fact, it is easy to see that the expected number $Z_{\mathcal  B}$ of local maxima of $\tilde\theta(\vv y)$ is 
\begin{align}\label{equ:zb}
Z_{\mathcal B} \overset{def}{=} \E (\big|\L\big[\tilde\theta(\vv y)  ; ~ {\mathcal{B}} \big] \big|)  
%= \E\sum_{\vv y\in \mathcal Y} \mathbb I_{\vv y}
= \sum_{\vv y\in \mathcal Y}  \prod_{\beta \in \mathcal{B}} p(\vv y_\beta | \vv y_{\neg\beta}; ~\theta). 
\end{align}
This can be used to define a normalized probability over $\mathcal Y$ via $\ell_{\mathcal{B}}(\vv y; ~\theta)/Z_{\mathcal B}$, which, however, is computationally intractable due to the difficulty for computing $Z_{\mathcal B}$. 
 
It is interesting to draw a comparison with the global Perturb-and-MAP model  
when $\mathcal B$  includes only the global set of all the elements of $\vv y$, 
in which case we can show that $Z_\mathcal{B} = \E (\big|\L\big[\tilde\theta(\vv y)  ; ~ {\mathcal{B}} \big] \big|) = 1$ in \eqref{equ:zb}. 
%This implies that $\big|\L\big[\tilde\theta(\vv y)  ; ~ {\mathcal{B}} \big] \big| =1$ with probability $1$, 
%(that is, there is almost always a unique global optimum), 
%because 
Because there always exists at least one optimum point, we must always have $\big|\L\big[\tilde\theta(\vv y)  ; ~ {\mathcal{B}} \big] \big|\geq 1$.  
%Combing this with $\E (\big|\L\big[\tilde\theta(\vv y)  ; ~ {\mathcal{B}} \big] \big|) = 1$
Therefore, we have $\big|\L\big[\tilde\theta(\vv y)  ; ~ {\mathcal{B}} \big] \big| =1$ with probability $1$ in this case, that is, there only exists one unique global optimum point. 
%we always $\big|\L\big[\tilde\theta(\vv y)  ; ~ {\mathcal{B}} \big] \big|\geq 1$ in practice. 
%Since $\big|\L\big[\tilde\theta(\vv y)  ; ~ {\mathcal{B}} \big] \big|\geq 1$, we have that  $\big|\L\big[\tilde\theta(\vv y)  ; ~ {\mathcal{B}} \big] \big| =1$, that is,  the global optimum is unique, with probability $1$.  
% and our result reduces to 
%Because there may be multiple local maximum point,  $\L\big[\tilde\theta(\vv y)  ; ~ {\mathcal{B}} \big]$is a \emph{random subset} of the domain $\mathcal Y$. 

%
%
%
%\red{remove?}An alternative view is to represent the set $\L\big[\tilde\theta(\vv y)  ; ~ {\mathcal{B}} \big]$ using a binary vector $\mathbb I = \{\mathbb I_{\vv y} \in \{0,1\}  \colon \vv y \in \mathcal Y\}$, for which our result specifies the marginal distribution $\prob(\mathbb I_{\vv y} =  1)$ of each $\mathbb I_{\vv y}$, but does not specify the higher order joint distributions. 
%
%

%As a result, it is difficult to exactly simulate $\L\big[\tilde\theta(\vv y)  ; ~ {\mathcal{B}} \big]$,  since it is difficult to enumerate all the possible local maximum of $\tilde\theta(\vv y)$. 
%But it is still possible to partially simulate $\L\big[\tilde\theta(\vv y)  ; ~ {\mathcal{B}} \big]$: We first generate a random $\tilde\theta$, which defines $\L$, and then use a greedy algorithm such as ICM to find a subset of $\L$. In this way,  it corresponds to partially observing $\L$. 

% \section{The locPMAP  model based on Random Potential Perturbation \& ICM}
\paragraph{locPMAP and Pseudolikelihood}
In practice, we can not exactly enumerate, nor observe the whole set 
$\mathcal L  = \L\big[\tilde\theta(\vv y)  ; ~ {\mathcal{B}} \big] $. 
We instead observe a single point $\vv y$ which we can assume to belong to $\mathcal L$. 
Without further assumption on how $\vv y$ is selected from $\mathcal L$, 
the only information available through observing a point $\vv y$ is $\prob(\vv y \in \mathcal L)$. 
As a result, given a set of i.i.d. observation $\{\vv y_i\}$, it is natural to maximize their overall observation likelihood: %it is natural to optimize the partial information likelihood: 
\begin{align*}
\hat w
&  = \argmax_{w} \sum_{i} \log p(\vv y^i \in  \L\big[\tilde\theta(\vv y;~ w)  ; ~ {\mathcal{B}} \big] )  \\
& = \argmax_{w} \sum_i \sum_{\beta \in \mathcal B} 
\log p(\vv y_\beta^i |\vv y_{\neg \beta}^i; ~w ). 
\end{align*}
This formulation interprets maximum composite likelihood (CL) \citep{lindsay1988composite} as a type of partial information likelihood for locPMAP. 
Here locPMAP 
is not a complete generative model in terms of the observation points $\vv y$. 
Although it is possible to complete the model by defining a specific mechanism to map from the local maxima set $\mathcal L$
to point $\vv y$, the corresponding full likelihood would become 
more challenging to compute and analyze. 
We can see that the essence of CL and PL is to trade off information for computational tractability, 
which is also reflected in the original definition \eqref{eq:pl} of PL where only the conditional information $p(y_j |\vv y_{\neg j}; ~ w)$ is taken into account. 
%Analysis has show
This also makes CL/PL more robust than the full MLE 
when the full model is misspecified, but the partial information used by CL/PL 
is correct \citep{xu2011robustness}. 
Our perspective 
motivates us to decode CL/PL, interpreted as training locPMAP, 
by mimicking the locPMAP procedure: 
we first generate a randomly perturbed potential function $\tilde\theta(\vv y) $ using i.i.d. Gumbel noise, 
and then find a local maximum with an arbitrary greedy optimization procedure (such as ICM with uniform random initialization). 
Although we do need to specify a particular greedy optimization method to select $\vv y$ from $\mathcal L$ for the purpose of inference, 
this seems to be a minor approximation, and may not influence the result significantly unless the selected greedy optimization 
is strongly biased in a certain way. 
%fashion 
%it may not influence the result significantly unless 
%We argue that such an inference procedure forms a better inferning pair with PL. 
%although it violates the missing information assumption,  the other algorithms, however, . 
%The slection mechanism used for decoding may not be consistent with the 
%The ICM algorithm is applied on these randomly perturbed potential functions and the local maximum is returned for each node.
%Despite the subtle gap between locPMAP and PL, it is still possible to 
%Our result motivates a new Local Perturb-and-MAP (locPMAP) model for structured prediction, in which 
%we assume that $\vv y$ is generated by a simple ICM algorithm applied on a randomly perturbed potential function. 
%%More specifically, 
%we assume that $\vv y$ is generated by the following generative model: 1) Potential functions are perturbed using i.i.d Gumbel noise. 2) The ICM algorithm is applied on these randomly perturbed potential functions and the local maximum is returned for each node.
 
We outline these two steps in Algorithm~\ref{algo}. 
In this case, we assume a simple case 
of likelihood and  $\mathcal B$ consists of  the set of single variables. 
%when $\mathcal B$ consists of  the set of single variables, and hence the likelihood of locPMAP corresponds to the pseudolikelihood of the original model.
For simplicity, from now on we drop the dependency on $\mathcal B$ and simply use the notation $\L [\tilde \theta(\vv y, \vx;~w )]$,
where the dependency on the input $\vv x$ is added explicitly.

Because the result of Algorithm~\ref{alg:perturbicml051015} is not deterministic, we can repeatedly run it for several iterations (by drawing multiple samples from our locPMAP model), and then take the mode of the returned samples. 
This also allows us to construct probabilistic outputs with error bars indicating the variance of the structured prediction. Note that in order to obtain analogous probabilistic results with other models, it would be necessary to perform expensive MCMC inference for the case of CRFs, or global combinatorial optimization under the typical Perturb-and-MAP models. %would requires global optimization. 
%Due to the consistency with the PL learning such prediction scheme should not only produce good results, but also produce probabilistic outputs with error bars indicating the variance of the structured prediction.   

% \begin{figure}
%\begin{center}
%   \includegraphics[width=0.9\linewidth]{./paper_figures/GP_vs_iters.pdf}
%\end{center}
%   \caption{A figure that illustrates how the accuracy of our method changes as we change the number of iterations. Based on the curve, we observe that the accuracy of our method saturates with about $15$ iterations.\vspace{-0.9cm}}
%\label{fig:iters}
%\end{figure}

\begin{algorithm}[t] 
\caption{Local Perturb-and-MAP (locPMAP)}  \label{alg:perturbicml051015}
\begin{algorithmic}
\STATE 1. Let $\tilde \theta(\vv y, \vx ;~w ) =  \theta(\vv y, \vx ;~w )  + \epsilon(\vv y)$, where $\epsilon(\vv y)$ are drawn i.i.d. from Gumbel $\sim G(0, 1)$. 
\STATE 2. Run an greedy optimization method (such as ICM) on the perturbed potentials $\tilde \theta (\vv y, \vx ;~w)$ to get $y_j  = \argmax_{y_j} \tilde \theta(y_j, \vv y_{\neg j}, \vx ;~w), ~~ \forall j $. 
\label{algo}
\end{algorithmic}
\end{algorithm}

\section{Learning Deep Unary Features with FCNs and Pseudolikelihood Loss}

%\subsection{Background}

\textbf{Background.} Under log-linear CRF models, we typically assume that the potential function is a weighted combination of features:

%the conditional probability distribution takes the following form:

%\begin{equation} \label{eq:crf_log_linear}
%p( y | x)=\frac{\theta_i \prod_{j} \theta_{ij}}{Z(x)} 
%\end{equation}

$$
\theta^{U}_i = \exp{\sum_{j} w^{U}_j f^{U}_j (x_i)}~~~~~~ \theta^{P}_{ij}= \exp{ \sum_{k} w^{P}_k f_k^{P} (x_i,x_j)}
%w_{unary}  \sum_i \theta(y_i,  \vx) + w_{pair} \sum_{ij}\theta(y_i, y_j, \vx)
$$

%w_{unary}  \sum_i \theta(y_i,  \vx) + w_{pair} \sum_{ij}\theta(y_i, y_j, \vx)

where $\theta^{U}_i$,  $w^{U}$ and $f^{U} (x_i)$  denote unary potentials for node $i$, learnable unary feature parameters, and unary features, respectively. Similarly, $\theta^{P}_{ij}$,  $w^{P}$ and $f^{P} (x_i,x_j)$ are pairwise potentials between nodes $i$ and $j$, learnable pairwise feature parameters, and pairwise features, respectively.

However, there are several important limitations related to log-linear CRF models. Such models have only a linear number of learnable parameters, which significantly limits the complexity of the model that can be learned from the data. One way to address this limitation is to construct highly non-linear and complex features that would work well even with a linear classifier. However, hand-engineering complex features is a challenging and time-consuming task requiring lots of domain expertise. To address both of these limitations, we train a deep network that optimizes a pseudolikelihood criterion and automatically learns complex unary features for our model. 

%In the modern world, many complex data patterns cannot be accurately modeled as linear relationships. Under these scenarios 

%Not only such an approach dramatically increases the complexity of the model but it also eliminates the need to hand-engineer the features. 

%\subsection{Related Work}

Recently, deep learning methods have been extremely successful in learning effective hierarchical features that achieve state-of-the-art results on a variety of vision tasks, including boundary detection, image classification, and semantic segmentation~\cite{gberta_2015_ICCV,NIPS2012_4824,DBLP:journals/corr/DonahueJVHZTD13,DBLP:journals/corr/ToshevS13,deepface}. A particularly useful model for structured prediction on  images is the Fully Convolutional Network (FCN)~\cite{long_shelhamer_fcn} used in combination with CRF models. These models combine the powerful methodology of deep learning for hierarchical feature learning with the effectiveness of CRFs for modeling structured pixel output, such as the class labels of neighboring pixels in semantic segmentation. %In comparison to regular neural networks, FCNs produce structured pixel-wise output. Thus, for our deep learning based experiments we employ FCNs architecture.

While in early approaches the FCN and the CRF were learned separately~\cite{chen14semantic}, more recently there has been successful work that has integrated CRF learning into the FCN framework~\cite{crfasrnn_iccv2015}. Additionally,  learning the parameters of the CRF in the neural network model has been addressed in~\cite{NIPS2009_3869,Do_AISTATS_2010,DBLP:journals/corr/KirillovSFZ0TR15}. 

%However, we note that these approaches use  loss functions that are approximations of the true likelihood behind the corresponding CRF model. 
%Instead, in our approach we minimize the loss with respect to a pseudolikelihood criterion, which is the true likelihood of our model and can also be computed exactly and efficiently. 

In our approach, we train FCN and CRF jointly by optimizing the entire FCN via backpropagation with respect to the pseudolikelihood loss as explained below. 
We then use the locPMAP procedure shown in Algorithm~\ref{alg:perturbicml051015} to decode the PL result, as justified by our intuition discussed earlier. 
%Thus, by using FCNs we increase the complexity of our model, while still maintaining consistent learning and inference steps and avoiding any approximation methods. We optimize the entire FCN via backpropagation with respect to the pseudolikelihood loss as explained below.

%Both of these details are beneficial, and as will be shown in the experimental section, improve the accuracy of our method in most cases.

%Since we know that the pseudolikelihood is a true likelihood for our Local Perturb-and-MAP model, we do not need to resort to any approximations.

%By employing deep FCNs we want to learn high quality unary features and also show that our proposed Local Perturbed Optimization (LPO) method works well with the features that were learned via FCNs. 

%By employing deep FCNs we want to show the two following things. First, we show that by optimizing a pseudolikelihood criterion in the FCN we can learn good features, which often lead to better accuracy in comparison to the hand-engineered features. Second, we show that our proposed local perturbed optimization method works well with the features that were learned via FCNs, which allows to conclude that the two methods are complementary and can be combined together effectively.

%\subsection{Minimizing Pseudolikelihood Loss}

\textbf{Optimizing the Pseudolikelihood Loss.}  Let our input be an image of size $h \times w \times c$, where $h,w$ refer to the height and width of the image and  $c$ is the number of input channels ($c=3$ for color RGB images, $c=1$ for grayscale images). Then assume that our goal is to assign one of $K$ possible labels to each pixel $(i,j)$. The label typically denotes the class of the object located at pixel $(i,j)$ or the foreground/background assignment. Now let us write our conditional pseudolikelihood probability as:

%Let $\vv Y$ denote an indicator ground truth matrix of size $h \times w \times K$, where $h,w,$ refer to the height and width of an image, and $K$ indicates the number of label values that any pixel can take. Then, if a pixel at the position $(i,j)$ has a label $k$ associated with it, $\vv Y$ can be defined as $Y(i,j,l)=1$  if $l=k$, and $Y(i,j,l)=0 ~~\forall l\neq k$. Similarly, let us define an indicator prediction matrix $\hat{\vv Y}$. If our method predicts label $k$ at pixel's position $(i,j)$, then $\hat Y(i,j,l)=1$  if $l=k$, and $\hat Y(i,j,l)=0 ~~\forall l\neq k$

%Now, let $p_{(i,j,l)}(y | x) $ denote a conditional probability indicating how likely node $(i,j)$ belongs to the label $l$. This probability can be formally written as:

\begin{equation}\label{eq:sm}
p(y_{(i,j)} = l ~|~\vv x, \vv y_{\neg {(i,j)}}) =  \frac{\exp( \theta_{i,j,l})}{\sum_{k=1}^K   \exp( \theta_{i,j,k})}
%p(y_j^i | \vx, \vv y_{\neg j}^i;~w)  
\end{equation}

where $\theta_{i,j,l}$ refers to the potential function values for label $l$ at pixel $(i,j)$, and where $\neg{(i,j)}$ indicates all the nodes connected to node $(i,j)$. More specifically, $\theta_{i,j,l}$ denotes the product of a unary potential at a node $(i,j)$ and all the pairwise potentials that are connected to the node $(i,j)$. The subscript $l \in \{1, \hdots, K\}$ in the probability notation denotes that this is a potential associated with the class label $l$. Then, to obtain a proper probability distribution we can normalize this potential value as shown in Equation~\ref{eq:sm}. Finally, we can write the loss of our FCN as:

\begin{equation}\label{eq:loss}
L_{i,j,l}= - \log{ p(y_{(i,j)} = l ~|~\vv x, \vv y_{\neg {(i,j)}};~\theta_{i,j,l})}
\end{equation}

The gradient of this loss can then be computed as:

\begin{equation}\label{eq:loss}
%\frac{\partial{L_{i,j,l}}}{\partial \theta_{i,j,l} }= p(y_{(i,j)} = l ~|~\vv x, \vv y_{\neg {(i,j)}};~\theta_{i,j,l}) - 1\{y_{i,j}=l\}
\frac{\partial{L_{i,j,l}}}{\partial \theta_{i,j,l} }= p(y_{(i,j)} = l ~|~\vv x, \vv y_{\neg {(i,j)}}) - 1\{y_{i,j}=l\}
\end{equation}

where the last term in the equation is simply an indicator function denoting whether ground truth label $y_{i,j}$ is equal to the predicted label $l$. This gradient is computed for every node $(i,j)$ and is then backpropagated to the previous layers of the FCN. We provide more details about our choice of deep architecture and the other learning details in the experimental section.

%To demonstrate the benefit of consistency between our LocPMAP inference method and a general PL learning scheme, we use the following experimental setup

\section{Experimental Results}

In this section, we evaluate the results of our Local Perturb-and-MAP (locPMAP) method against  other inference techniques 
such as loopy belief propagation (LBP) and mean field (MF) 
on three different datasets. In all our experiments, we use the following setup. First, we learn the parameters of a CRF-based model using the PL learning criterion. We note that the PL learning is done once, and the same learned parameters are then used for both our method and the other baseline inference techniques. 
This is done to demonstrate that our locPMAP procedure acts as a better inference procedure than existing approximation inference techniques. 
%the benefit of consistent learning and inference of our method, and also to compare the performance between our method and inference techniques that are not consistent with the PL learning step. We note that in the case of the baseline methods, the PL learning objective is only an approximation to the true likelihood function. However, for our locPMAP method the PL learning objective is the true likelihood of our model. Our study is aimed to show that by enforcing consistency between learning and inference, our approach can outperform other inference techniques, even methods known for their strong optimization performance.

Since our method relies on ICM to make predictions, we compare our approach with traditional ICM. We also compare against an iterative version of ICM (ICM-iter) which is executed for the same number of iterations as our method, in order to give both methods the same ``computational budget." In this iterative version of ICM, at each iteration we randomly perturb the potentials by setting a small fraction (e.g., $0.1$) of them to zero (the technique is known as dropout in the deep learning literature~\cite{JMLR:v15:srivastava14a}). In all experiments, for our method and ICM-iter, we only perturb the unary potentials. Additionally, we use a grid-based graph model, with each node connected to its $4$ neighbors, as this is standard for computer vision problems. The details of the pairwise potentials are discussed separately below for each task. We also tested inference of the learned model using loopy belief propagation (LBP), mean field (MF), simulated annealing (SA) and Gibbs sampling (MCMC). For each of the three tasks we show that our Local Perturb-and-MAP method consistently outperforms other inference techniques, 
thus demonstrating that locPMAP forms a better practical inference procedure for models learned from PL optimization. 
%thus demonstrating the benefit of optimizing the true likelihood function, and maintaining consistent learning and inference steps. 
We now present each of our experiments in more detail.

%In addition to the ICM based baselines, we also compare our method with other commonly used inference techniques such as loopy belief propagation, graph cuts, simulated annealing. Our main goal in this work is to show that the consistency between learning and inference can improve the results for various structured output prediction tasks. The first experimental step includes learning the parameters for every method by optimizing a pseudolikelihood criterion. We note that while for our method the pseudolikelihood is a true likelihood function, for all other baseline methods the PL is only an approximation of a true likelihood. As a result, this disconnection between learning and inference leads to a pretty bad performance in some inference methods such as graph cuts and simulated annealing. To preserve space, we only include several top baseline methods in our comparisons. We evaluate the performance of each method according to the Intersection over Union (IoU) evaluation metric, which is the most common metric for structured output prediction tasks in computer vision. We show that the consistency between learning and inference in our LPO method allows us to outperform other baselines. We now present each of our experiments in more detail.

%evaluation metric

\setlength{\tabcolsep}{4pt}
   
\captionsetup[table]{aboveskip=0pt}

   \begin{table}[t]
   \small
    \begin{center}
    \begin{tabular}{  c | c | c | c | c | c | c |}
    \cline{2-7}
    & \multicolumn{2}{ c |}{Background} & \multicolumn{2}{ c |}{Foreground}  & \multicolumn{2}{ c |}{Mean}\\
    \cline{2-7}
    & \multicolumn{1}{ c |}{Raw} & \multicolumn{1}{ c |}{Deep} &  \multicolumn{1}{ c |}{Raw} &  \multicolumn{1}{ c |}{Deep}  &  \multicolumn{1}{ c |}{Raw}  &  \multicolumn{1}{ c |}{Deep}  \\ \cline{1-7}
      \multicolumn{1}{| c |}{LBP} & 0.846 & \bf 0.854 & 0.029 & 0.112 & 0.438 & 0.446 \\ \hline
       \multicolumn{1}{| c |}{MF} & \bf 0.861 & 0.837 & 0.059 & 0.209 & 0.460 & 0.523 \\ \hline
           \multicolumn{1}{| c |}{ICM-iter} & 0.722  & 0.797 & 0.312 & 0.379 & 0.517 & 0.588\\ \hline
         \multicolumn{1}{| c |}{SA} & 0.806 & 0.837 & 0.234 & 0.202 & 0.520 & 0.520 \\ \hline
          \multicolumn{1}{| c |}{ICM} & 0.730  & 0.807 & 0.319 & 0.389 & 0.525 & 0.598\\ \hline
           \multicolumn{1}{| c |}{Gibbs} & 0.840 & 0.840 & 0.233 & 0.197 & 0.537 & 0.518 \\ \hline
            \multicolumn{1}{| c |}{LocPMAP} & 0.753 & 0.826 &\bf 0.337 & \bf 0.404 & \bf 0.545 & \bf 0.615\\ \hline
    \end{tabular}
    \end{center}
    \caption{Results of handwritten digit denoising on the MNIST dataset. Performance is measured according to the the Intersection over Union (IoU) for both the foreground and the background mask. We compare the results when using raw corrupted pixel intensities as unary features (Raw) versus deep unary features learned via an FCN (Deep). Our locPMAP method outperforms the other baseline inference techniques. Additionally, we observe that using deep features tends to  improve the overall accuracy.\vspace{-0.5cm}}
    \label{mnist_table}
   \end{table}

\captionsetup[figure]{labelformat=empty}
\captionsetup[figure]{skip=1pt}

\begin{figure}[tb]
\centering

\myfigurefivecol{./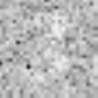}
\myfigurefivecol{./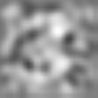}
\myfigurefivecol{./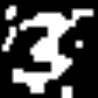}
\myfigurefivecol{./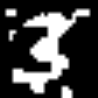}
\myfigurefivecol{./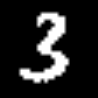}

\myfigurefivecol{./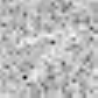}
\myfigurefivecol{./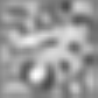}
\myfigurefivecol{./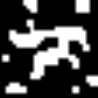}
\myfigurefivecol{./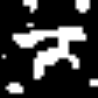}
\myfigurefivecol{./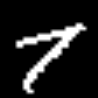}

\myfigurefivecol{./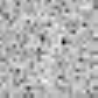}
\myfigurefivecol{./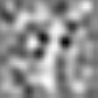}
\myfigurefivecol{./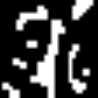}
\myfigurefivecol{./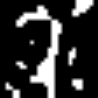}
\myfigurefivecol{./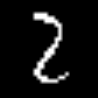}

\myfigurefivecol{./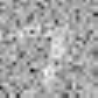}
\myfigurefivecol{./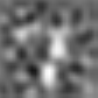}
\myfigurefivecol{./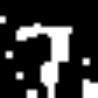}
\myfigurefivecol{./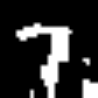}
\myfigurefivecol{./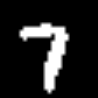}

\myfiguresmallfivecolcaption{./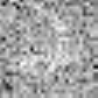}{Input}
\myfiguresmallfivecolcaption{./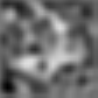}{FCN}
\myfiguresmallfivecolcaption{./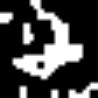}{ICM}
\myfiguresmallfivecolcaption{./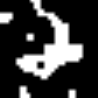}{LocPMAP}
\myfiguresmallfivecolcaption{./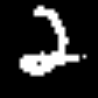}{True Mask}

\captionsetup{labelformat=default}
\setcounter{figure}{0}
   %\caption{AA}
    \caption{Visualizations of handwritten digit denoising. Images in the first column represent corrupted digit inputs. The second column shows the unary features that were learned using fully convolutional networks and a pseudolikelihood loss. Images in the third column correspond to ICM predictions, whereas the fourth column depicts our Local Perturb-and-MAP results. In the last column we show the corresponding original ground truth black and white images. Note that compared to the ICM predictions, our method makes fewer false positive predictions. Additionally, we observe that deep features exhibit significantly less noise than the original input, which improves the overall accuracy (see quantitative results in Table \ref{mnist_table}).\vspace{-0.5cm}}
    \label{fig_mnist}
\end{figure}

\subsection{Denoising Handwritten Digits}

For our first evaluation we use the MNIST dataset~\cite{lecun-mnisthandwrittendigit-2010} which contains black and white images of handwritten digits. We 
corrupt each $28 \times 28$ image using Gumbel noise with $0.25$ signal-to-noise ratio. This produces corrupted grayscale images, which are used as input to our system. The objective is to recover the original black/white (background/foreground) value of each pixel.
%Even though we used Gumbel noise to corrupt the images for this task, in the other experiments we show that our method can denoise images with other types of noise as well. 

In our experiments, we used $5000$ images for training and $5000$ images for testing. We performed two types of experiments on this task. First, we evaluated all methods using the corrupted pixel intensity values as unary features. For pairwise features between nodes $i$ and $j$, we used the corrupted intensity values at pixels $i$ and $j$.

For the second experiment, we trained a fully convolutional network (FCN) to learn the unary features. To optimize the pseudolikelihood criterion, we used pairwise potential parameters that were learned using the corrupted potentials. We kept the pairwise potential parameters fixed and performed gradient backpropagation only through the unary feature parameters.

To train the FCN, we used an architecture composed of $5$ convolutional layers with kernel size of $3 \times 3$ for the first four layers and kernel size of $1 \times 1$ for the last layer. The output plane dimensions for the convolutional layers were $64,126,256,512$ and $2$ respectively. As hyperparameters, we used a learning rate of $10^{-6}$, a momentum of $0.9$, a batch size of $100$, and RELU non-linear functions in between the convolutional layers. To avoid the reduction in resolution inside the deep layers, we did not use any pooling layers. We trained our FCN to minimize the pseudolikelihood loss for $\approx 3000$ iterations. For all of our deep learning experiments we used the Caffe library~\cite{jia2014caffe}. To run our locPMAP method, we used $50$ iterations.

In Table~\ref{mnist_table}, we present quantitative results of our method and several  baseline methods. The performance of each method is evaluated in terms of the Intersection over Union (IoU) metric for background and foreground classes. Additionally, we separately evaluate each baseline inference method using corrupted pixel values as unary features (Raw) and also using deep features (Deep). The results demonstrate that our locPMAP method outperforms the other inference baselines. 
 %thus suggesting that consistent learning and inference is critical for good performance. 
 Additionally, we note that learning unary features via FCNs substantially improves the accuracy for most methods.  We also present qualitative results in Figure~\ref{fig_mnist}.

%Observe, that due to extremely low signal-to-noise ratio, this particular task would be challenging even for humans. Also, note that the unary features learned via FCNs (second row) contain significantly less noise than the original input images, which illustrates the benefit of deep features. Finally, we point out that in comparison to traditional ICM predictions, our Local Perturb-and-MAP method generates results that contain fewer false positives and that our predictions look spatially smoother in general.

We also tested other inference methods such as tree-reweighted belief propagation, and graph cuts, 
but found that these methods performed very poorly with PL learning. 
%However, due to the inconsistency between the PL learning and inference steps, these methods performed very poorly. 
Due to the limited space available, we omit these results from our paper.

%Note that loopy belief propagation performs 

\subsection{Caltech Silhouette Reconstruction}

   \begin{table}
   \small
    \begin{center}
    \begin{tabular}{  c | c | c | c | c | c | c |}
    \cline{2-7}
    & \multicolumn{2}{ c |}{Background} & \multicolumn{2}{ c |}{Foreground}  & \multicolumn{2}{ c |}{Mean}\\
    \cline{2-7}
    & \multicolumn{1}{ c |}{Raw} & \multicolumn{1}{ c |}{Deep} &  \multicolumn{1}{ c |}{Raw} &  \multicolumn{1}{ c |}{Deep}  &  \multicolumn{1}{ c |}{Raw}  &  \multicolumn{1}{ c |}{Deep}  \\ \cline{1-7}
      \multicolumn{1}{| c |}{LBP} & 0.539 & 0.539 & 0.080 & 0.154 & 0.310 & 0.347 \\ \hline
        \multicolumn{1}{| c |}{MF} & 0.686  & 0.695 & 0.598 & 0.674 & 0.642 & 0.684\\ \hline
        \multicolumn{1}{| c |}{ICM-iter} & 0.659  & 0.731 & 0.637 & 0.727 & 0.648 & 0.729\\ \hline
        \multicolumn{1}{| c |}{SA} & 0.688  & 0.692 & 0.623 & 0.696 & 0.655 & 0.694\\ \hline
          \multicolumn{1}{| c |}{Gibbs} & \bf 0.692  & 0.708 & 0.624 & 0.696 & 0.658 & 0.702\\ \hline
          \multicolumn{1}{| c |}{ICM} & 0.672  & 0.746 & 0.661 & 0.733 & 0.667 & 0.739\\ \hline
            \multicolumn{1}{| c |}{ LocPMAP} & 0.681 & \bf 0.754 &\bf 0.666 & \bf 0.735 & \bf 0.673 & \bf 0.745\\ \hline
    \end{tabular}
    \end{center}
    \caption{Results on the Caltech Silhouette Reconstruction task. We evaluate the results using the Intersection over Union (IoU) metric. We test each method using raw unary features versus deeply learned FCN features. Our method achieves better accuracy than the other baseline inference methods.\vspace{-0.4cm}}
    \label{caltech_table}
   \end{table}

For our second task, we choose a more diverse dataset consisting of noise-corrupted silhouettes generated from the ground truth foreground/background segmentations of images from Caltech-$101$~\cite{Fei-Fei:2007:LGV:1235884.1235969}, which spans 101 object classes. Each silhouette is a $28 \times 28$ image generated by adding Gaussian noise with $0.5$ signal-to-noise ratio to each pixel of the ground-truth foreground/background segmentation. The goal is to reconstruct the original ground truth foreground/background segmentation from the corrupted silhouette. Due to the large number of object classes in the dataset, the variability of the silhouette shape is much larger compared to the case of the digit denoising task.

We use the exact same experimental setup as in the earlier experiment for handwritten digit denoising. We present quantitative results in Table~\ref{caltech_table}. 
Again, the results indicate that locPMAP outperforms the other inference baselines for the model learned from PL optimization. 
%, which confirms again our hypothesis that the ``inferning'' scheme of our method helps to improve accuracy.  

%Once again we note that we run other more advanced inference methods such as graph cuts, and tree-reweighted belief propagation on this task. However, just like in the previous experiment, due to incompatibility with the learning objective these methods performed very poorly. Therefore, we only included the results of the top baseline methods.

\captionsetup[figure]{labelformat=empty}
\captionsetup[figure]{skip=5pt}

\begin{figure*}
\centering

\myfigurebigfivecol{./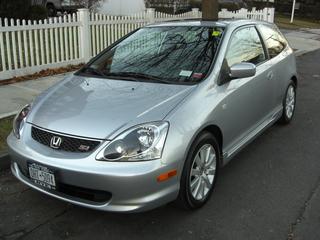}
\myfigurebigfivecol{./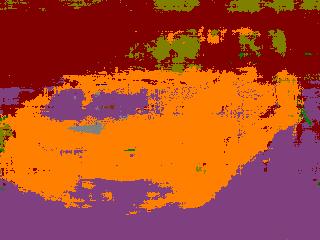}
\myfigurebigfivecol{./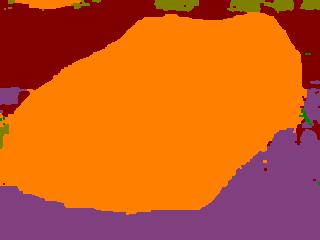}
\myfigurebigfivecol{./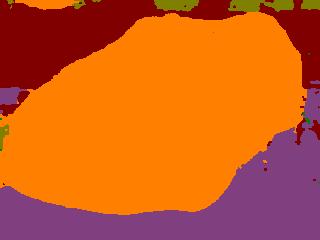}
\myfigurebigfivecol{./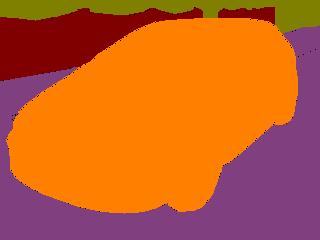}

\myfigurefivecolcaption{./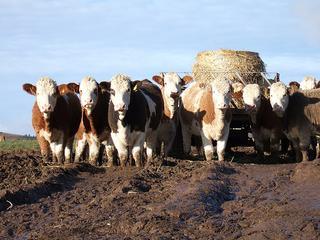}{Input Image}
\myfigurefivecolcaption{./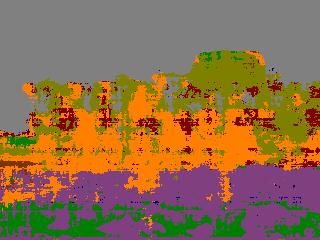}{Boosted Classifier}
\myfigurefivecolcaption{./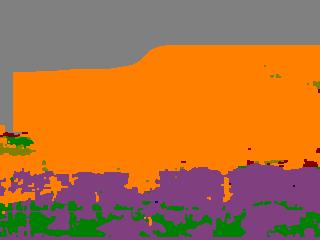}{ICM}
\myfigurefivecolcaption{./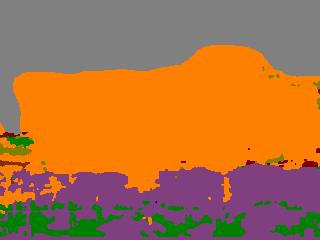}{LocPMAP}
\myfigurefivecolcaption{./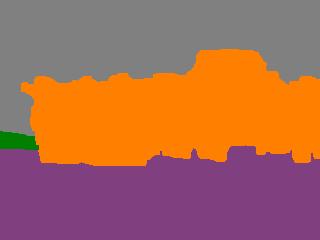}{Ground Truth}

%\myfigurebigfivecol{./paper_figures/stanford_bg/input/9003116.jpg}
%\myfigurebigfivecol{./paper_figures/stanford_bg/boosted_rf/9003116.jpg}
%\myfigurebigfivecol{./paper_figures/stanford_bg/icm/9003116.jpg}
%\myfigurebigfivecol{./paper_figures/stanford_bg/gp/9003116.jpg}
%\myfigurebigfivecol{./paper_figures/stanford_bg/gt/9003116.jpg}

%\myfigurefivecol{./paper_figures/stanford_bg/input/1100014.jpg}
%\myfigurefivecol{./paper_figures/stanford_bg/boosted_rf/1100014.jpg}
%\myfigurefivecol{./paper_figures/stanford_bg/icm/1100014.jpg}
%\myfigurefivecol{./paper_figures/stanford_bg/gp/1100014.jpg}
%\myfigurefivecol{./paper_figures/stanford_bg/gt/1100014.jpg}

%\myfigurefivecolcaption{./paper_figures/stanford_bg/input/8003131.jpg}{Input Image}
%%\myfigurefivecolcaption{./paper_figures/stanford_bg/boosted_rf/8003131.jpg}{~\cite{Gould+al:ICCV09}}
%\myfigurefivecolcaption{./paper_figures/stanford_bg/boosted_rf/8003131.jpg}{Boosted Classifier}
%\myfigurefivecolcaption{./paper_figures/stanford_bg/icm/8003131.jpg}{ICM}
%\myfigurefivecolcaption{./paper_figures/stanford_bg/gp/8003131.jpg}{\textbf{LPO}}
%\myfigurefivecolcaption{./paper_figures/stanford_bg/gt/8003131.jpg}{Ground Truth}

\captionsetup{labelformat=default}
\setcounter{figure}{1}
   %\caption{AA}
    \caption{A figure illustrating qualitative results for the scene labeling task. In the first column, we show the original RGB input images. The second column represents boosted classifier predictions~\cite{Gould+al:ICCV09} while in the third column, we illustrate ICM predictions. In the fourth column, we present the results of our LocPMAP method. Notice that, compared to the other methods, our predictions are spatially smoother and crispier around the object boundaries.\vspace{-0.2cm}}
    \label{fig_stanford}
\end{figure*}

\subsection{Scene Labeling}

 \setlength{\tabcolsep}{2pt}

   \begin{table*}
   \small
    \begin{center}
    \begin{tabular}{ c | c | c | c | c | c | c | c | c | c | c | c | c | c | c | c | c | c | c |}
    \cline{2-19}
    & \multicolumn{2}{ c |}{Sky} & \multicolumn{2}{ c |}{Tree}  & \multicolumn{2}{ c |}{Road}  & \multicolumn{2}{ c |}{Grass} & \multicolumn{2}{ c |}{Water}  & \multicolumn{2}{ c |}{Building} & \multicolumn{2}{ c |}{Mountain} & \multicolumn{2}{ c |}{Object}  & \multicolumn{2}{ c |}{Mean}\\
    \cline{2-19}
    & \multicolumn{1}{ c |}{Raw} & \multicolumn{1}{ c |}{Deep} &  \multicolumn{1}{ c |}{Raw} &  \multicolumn{1}{ c |}{Deep}  &  \multicolumn{1}{ c |}{Raw}  &  \multicolumn{1}{ c |}{Deep}   & \multicolumn{1}{ c |}{Raw} & \multicolumn{1}{ c |}{Deep} &  \multicolumn{1}{ c |}{Raw} &  \multicolumn{1}{ c |}{Deep}  &  \multicolumn{1}{ c |}{Raw}  &  \multicolumn{1}{ c |}{Deep}  & \multicolumn{1}{ c |}{Raw} & \multicolumn{1}{ c |}{Deep} &  \multicolumn{1}{ c |}{Raw} &  \multicolumn{1}{ c |}{Deep}  &  \multicolumn{1}{ c |}{Raw}  &  \multicolumn{1}{ c |}{Deep}\\ \cline{1-19}
      \multicolumn{1}{| c |}{ICM} & 0.807 & 0.843 & 0.648 & 0.628 & 0.845 & 0.853 & 0.790 & 0.729 & 0.770 & 0.759 & \bf 0.686 & 0.724 & 0.343 & 0.175 & 0.633 & 0.668 & 0.690 & 0.672 \\ \hline
      \multicolumn{1}{| c |}{ICM-iter} & 0.837 & \bf 0.846 & 0.649 & 0.631 & 0.845 & 0.855 & 0.790 & 0.736 & 0.776 & 0.767 & \bf 0.686 & 0.726 & 0.421 & 0.233 & 0.646 & 0.684 & 0.706 & 0.685 \\ \hline
      \multicolumn{1}{| c |}{LBP} & \bf 0.861 & 0.840 & 0.640 & 0.629 & 0.832 & 0.842 & 0.783 & 0.690 & 0.771 & 0.707 & 0.675 & 0.708 & \bf 0.522 & 0.157 & 0.646 & 0.552 & 0.716 & 0.638 \\ \hline
       \multicolumn{1}{| c |}{Gibbs} & 0.854 & 0.840 & \bf 0.659 & 0.632 & \bf 0.848 & 0.856 & 0.792 & 0.738 & \bf 0.802 & 0.766 & 0.705 & 0.728 & 0.439 & 0.215 & 0.646 & 0.690 & 0.718 & 0.683 \\ \hline
      \multicolumn{1}{| c |}{ LocPMAP} & 0.854 & \bf 0.846 & 0.650 & \bf 0.636 & 0.844 & \bf 0.859 & \bf 0.794 & \bf 0.742 & 0.784 & \bf 0.769 & \bf 0.686 & \bf 0.729 & 0.512 & \bf 0.259 & \bf 0.652 & \bf 0.701 & \bf 0.722 & \bf 0.693 \\ \hline
    \end{tabular}
    \end{center}
    \caption{Quantitative results for the scene labeling task. The performance is evaluated using an IoU metric for each class. For raw features we use the output predictions from~\cite{Gould+al:ICCV09}. In this case, we observe that raw unary features yield higher accuracy in comparison to deep unary features, probably due to the small dataset size. However, we observe again that our LocPMAP is overall the best approach across all inference methods considered here.\vspace{-0.4cm}}
    \label{stanford_table}
   \end{table*}

As our last task, we consider the problem of semantic scene segmentation. For this task, we use the Stanford background dataset~\cite{Gould+al:ICCV09}, which has per-pixel annotations for a total of $715$ scene color images of size $240 \times 320$. Our goal is to assign every pixel to one of $8$ possible classes: sky, tree, road, grass, water, building, mountain, and foreground. In this case the input to the system is the RGB photo and the desired output is the semantic segmentation. We randomly split the dataset into a training set of $600$ images and a test set of $115$ images.

Once again we perform two experiments for this task. First, we use the boosted unary potentials provided by~\citet{Gould+al:ICCV09} as the unary features in our CRF model. Next, to construct pairwise potentials we extract HFL boundaries~\cite{gberta_2015_ICCV} from the images. We then compute the gradient on the boundaries, and use it as pairwise features for every pair of adjacent pixels. Then, just as earlier, we learn the CRF parameters by optimizing the pseudolikelihood objective, and finally perform the inference using the learned parameters. 

For the second experiment, our goal is to learn deep features from the data instead of using the boosted features provided by~\citet{Gould+al:ICCV09}. To do this we use a fully convolutional network architecture based on DeepLab~\cite{chen14semantic}. This architecture contains $19$ convolutional layers. To train the FCN, we use the same learning hyperparameters and setup as in the previous two experiments. As before, we fix the pairwise potential parameters, and only learn the parameters associated with the unary terms. After the unary learning is done, we learn new pairwise parameters given the learned unary features.

In Table~\ref{stanford_table}, we present our results for the scene labeling task. Once again, we show that our Local Perturb-and-MAP method outperforms other inference methods in both scenarios: using boosted unary features~\cite{Gould+al:ICCV09} and also using our learned deep features. 

Interestingly, we note that for this task, the accuracy we achieve using deep features is lower than the accuracy obtained using boosted features. We hypothesize that this happens because the Stanford Background dataset is relatively small ($600$ training images) and thus it does not enable effective training of the large-capacity FCN.

\begin{figure*}
\centering

\myfiguresixcol{./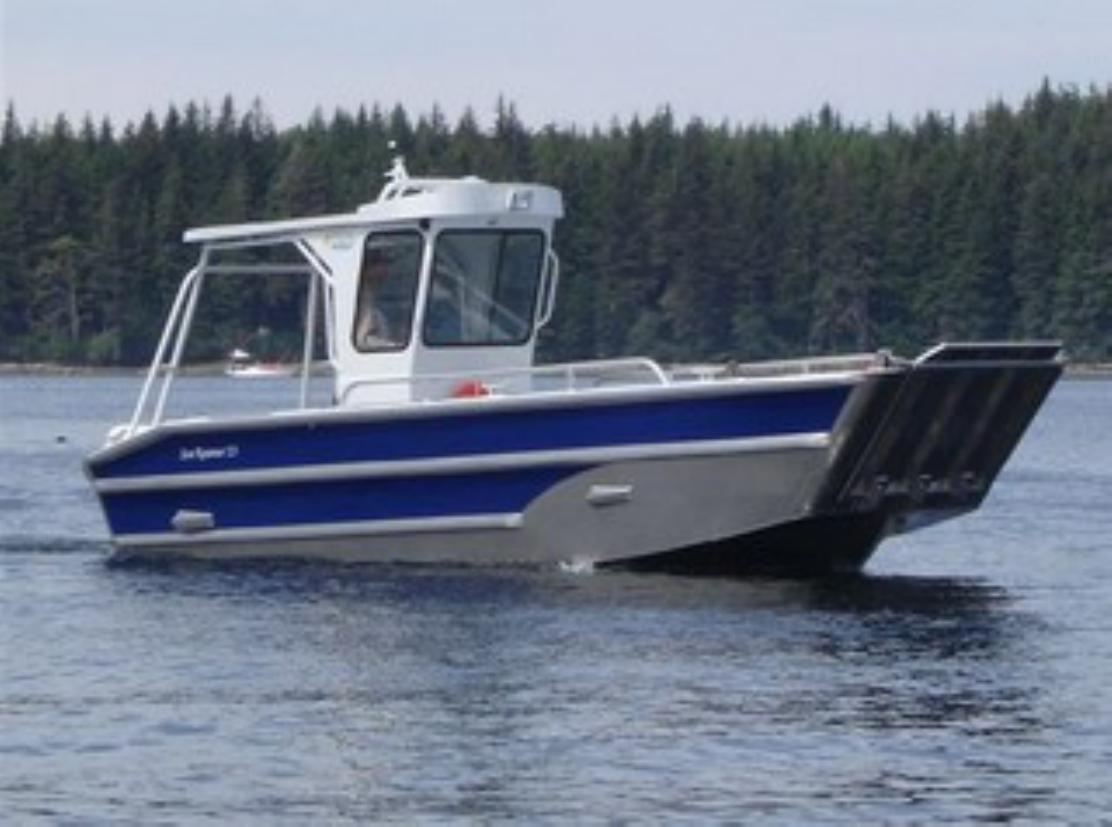}
\myfiguresixcol{./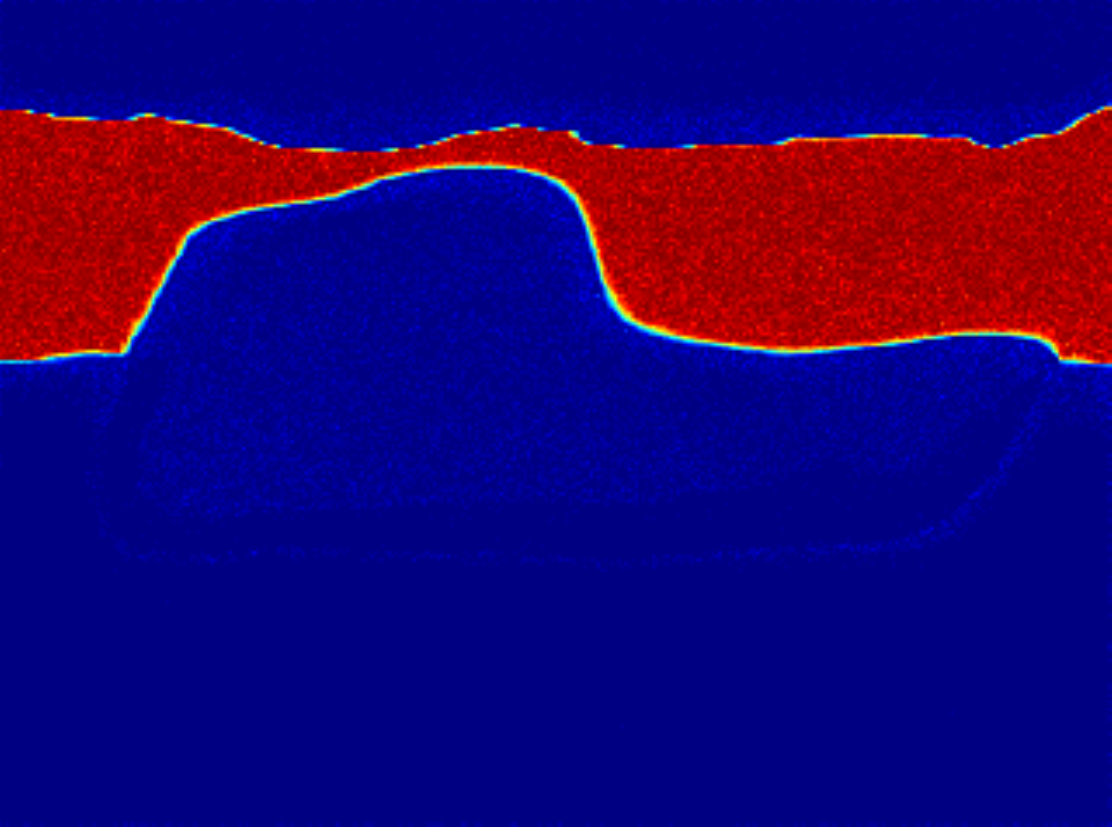}
\myfiguresixcol{./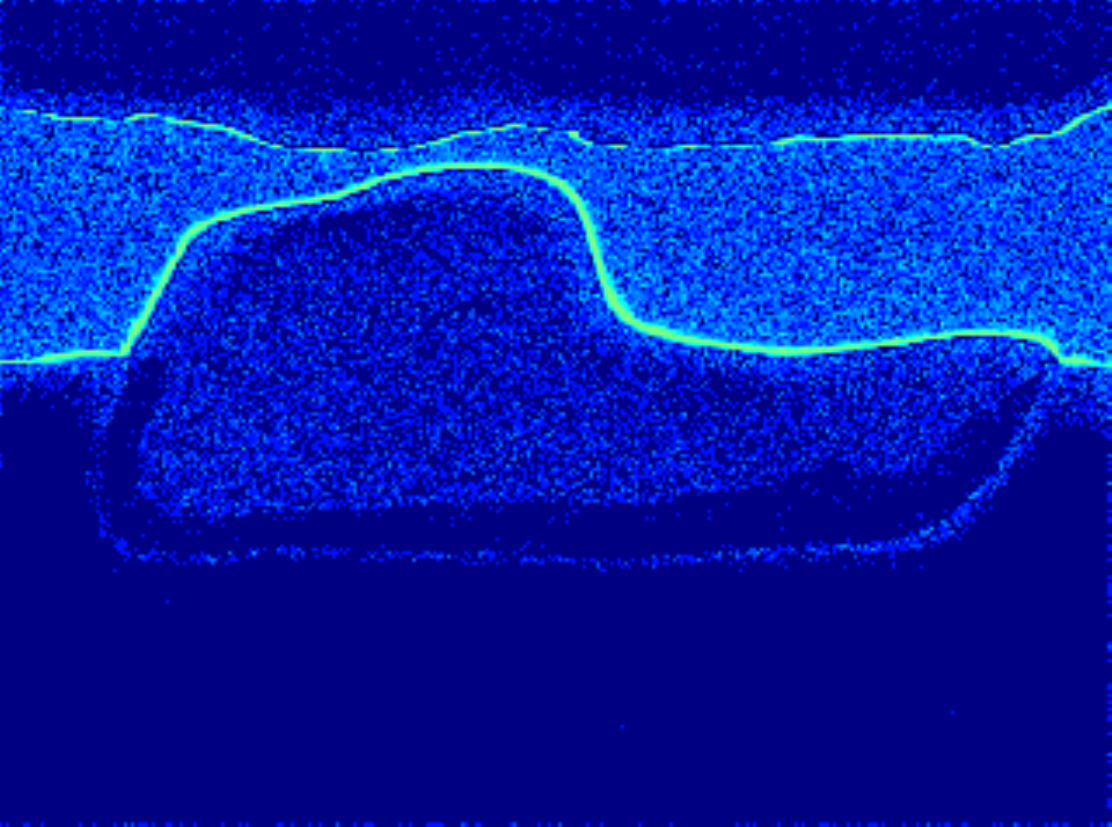}
\myfiguresixcol{./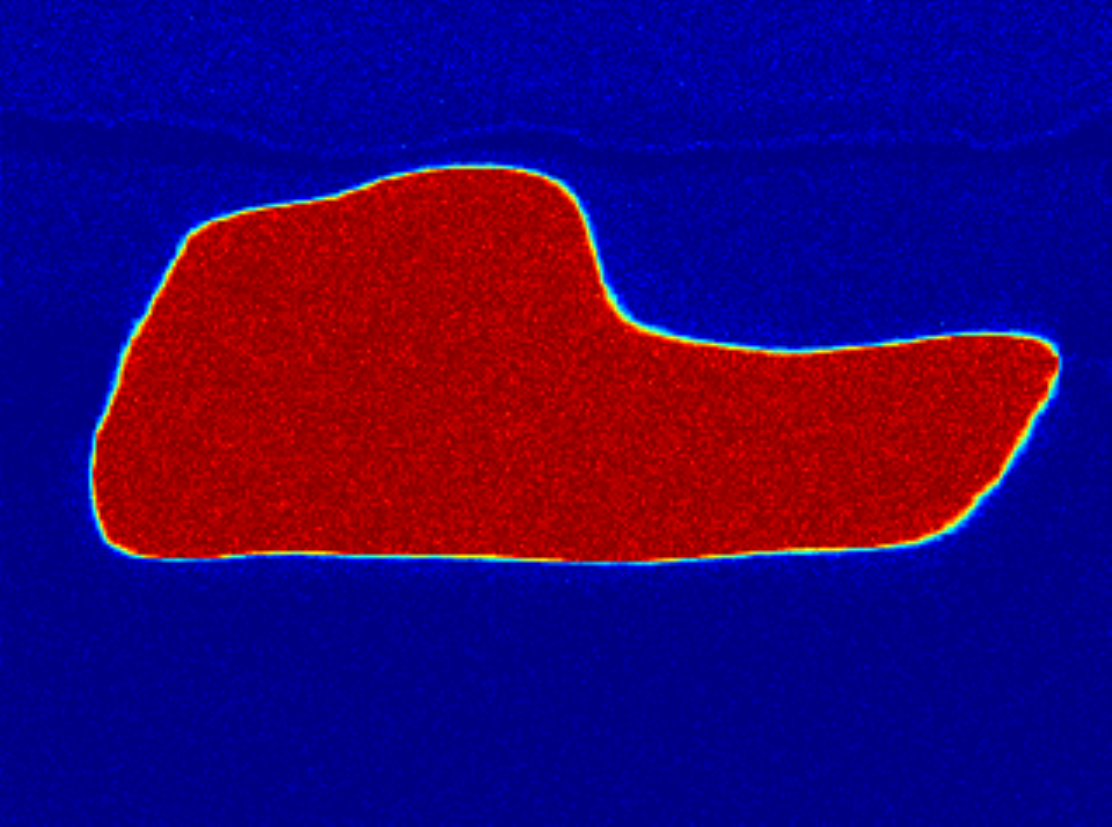}
\myfiguresixcol{./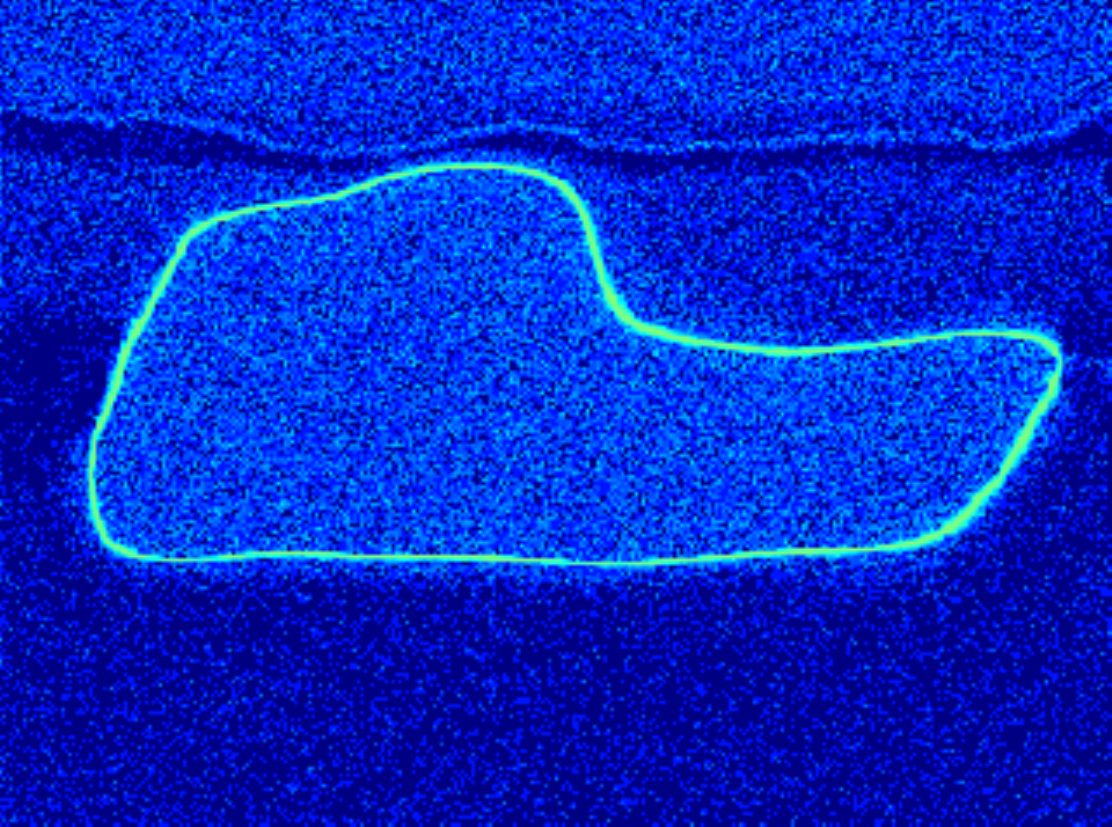}
\myfiguresixcol{./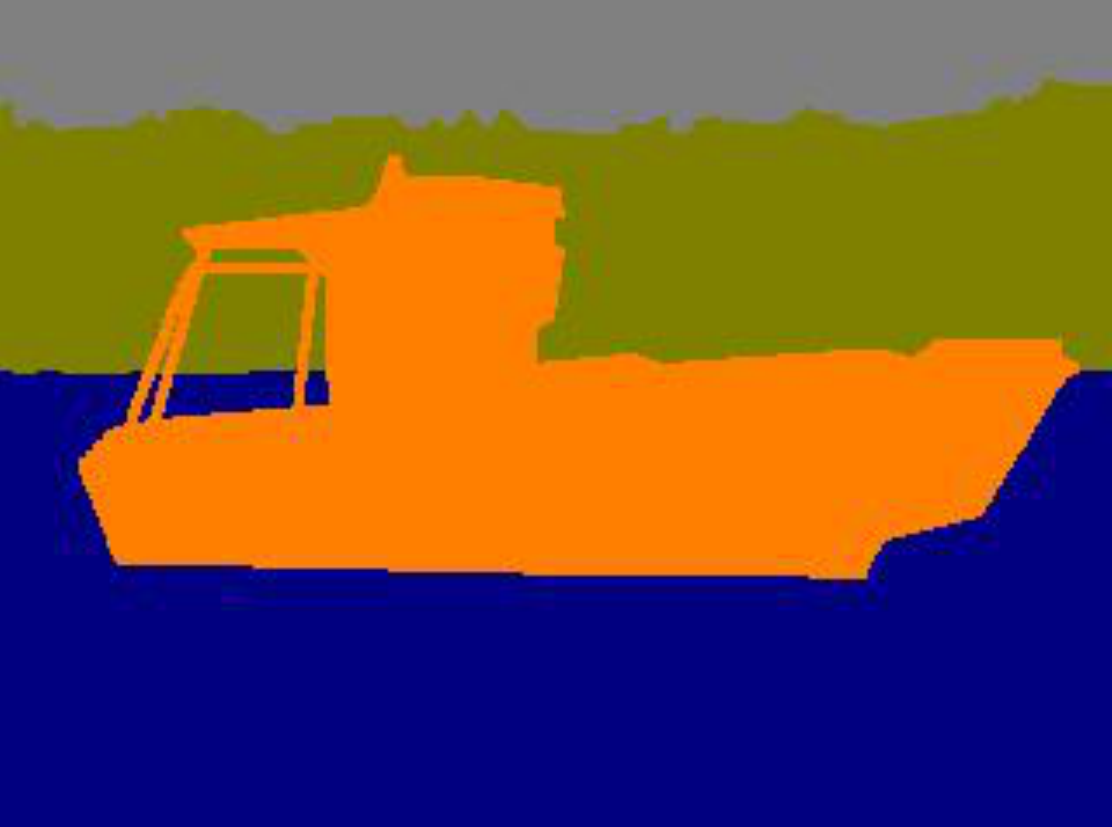}

%\myfiguresixcol{./}
%\myfiguresixcol{./}
%\myfiguresixcol{./}
%\myfiguresixcol{./}
%\myfiguresixcol{./}
%\myfiguresixcol{./}

\myfiguresixcol{./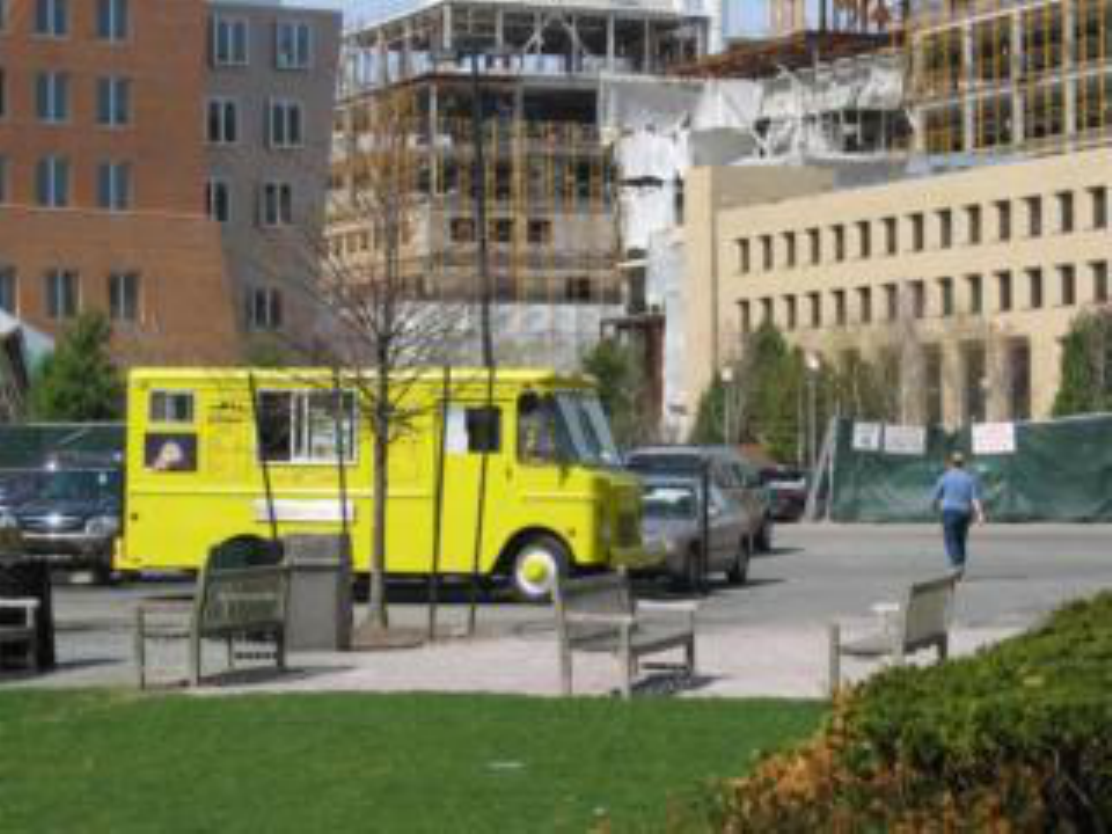}
\myfiguresixcol{./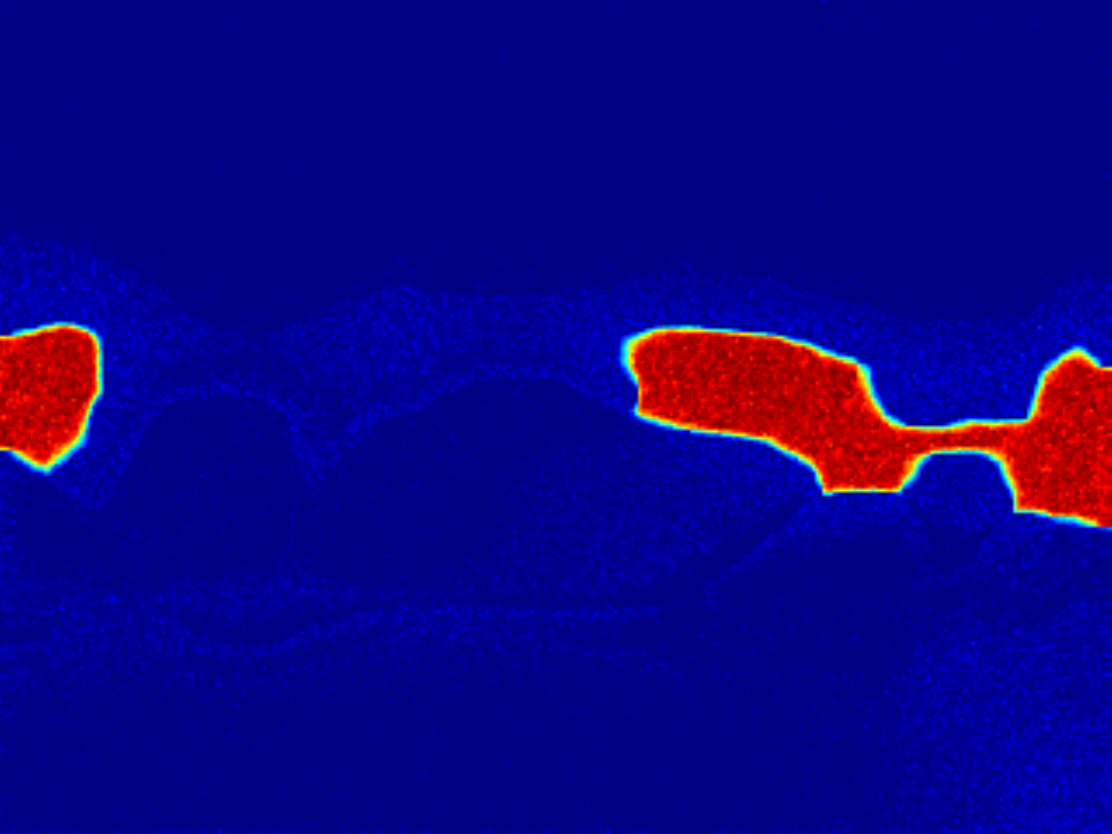}
\myfiguresixcol{./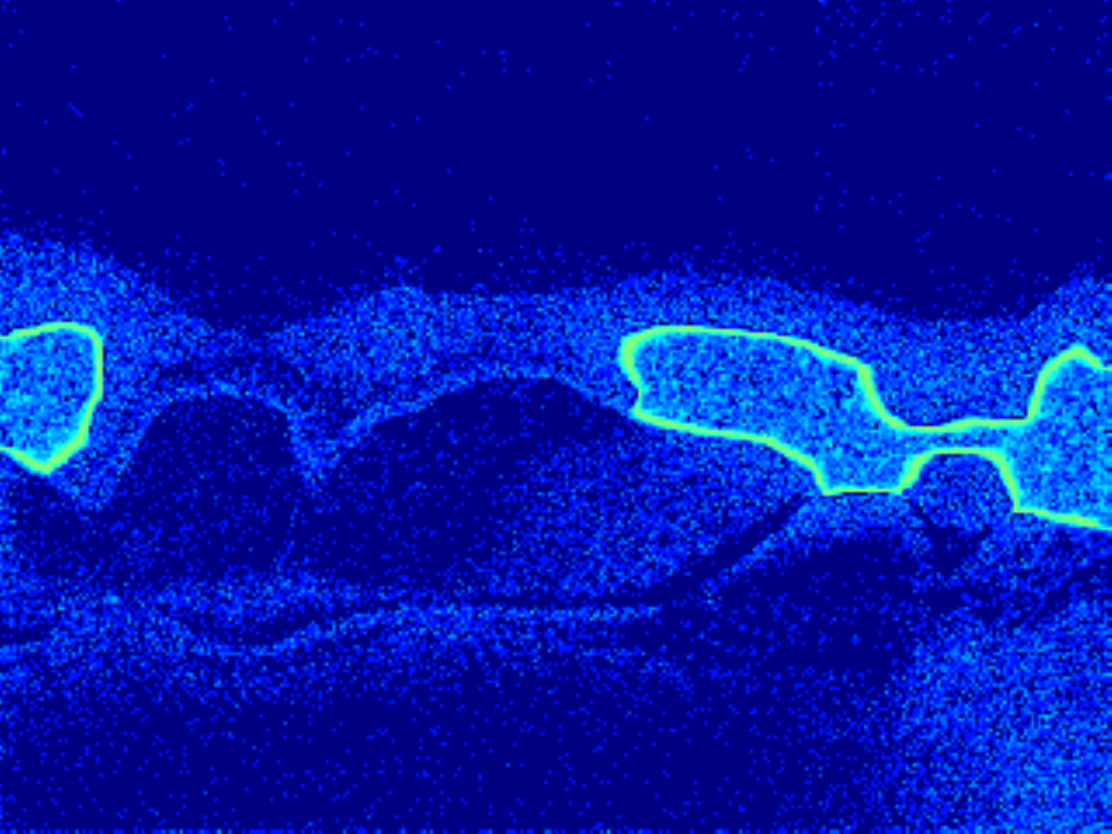}
\myfiguresixcol{./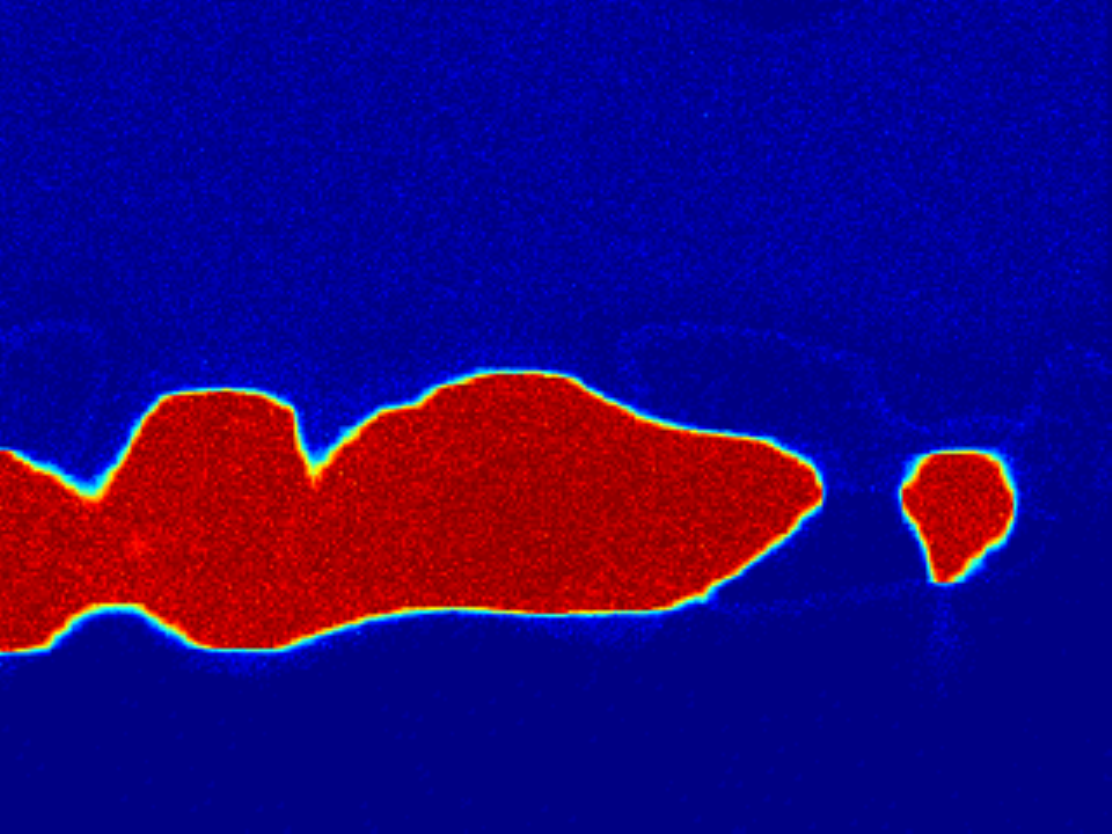}
\myfiguresixcol{./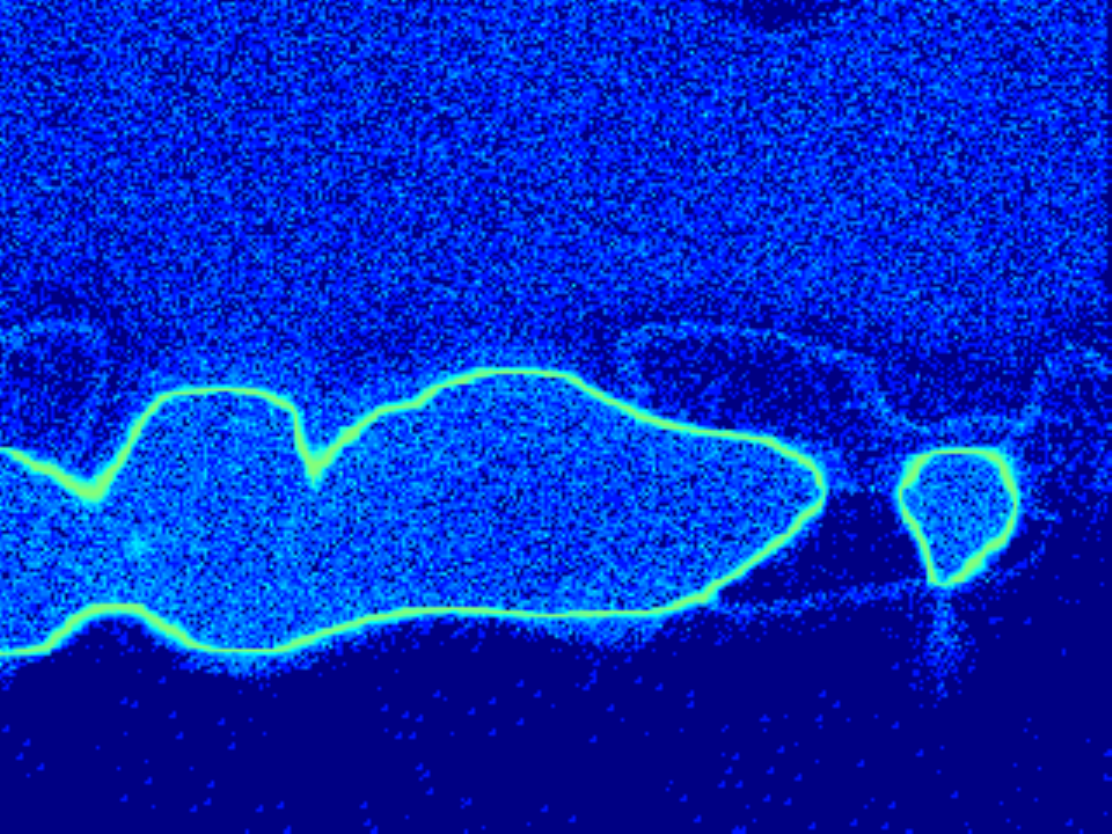}
\myfiguresixcol{./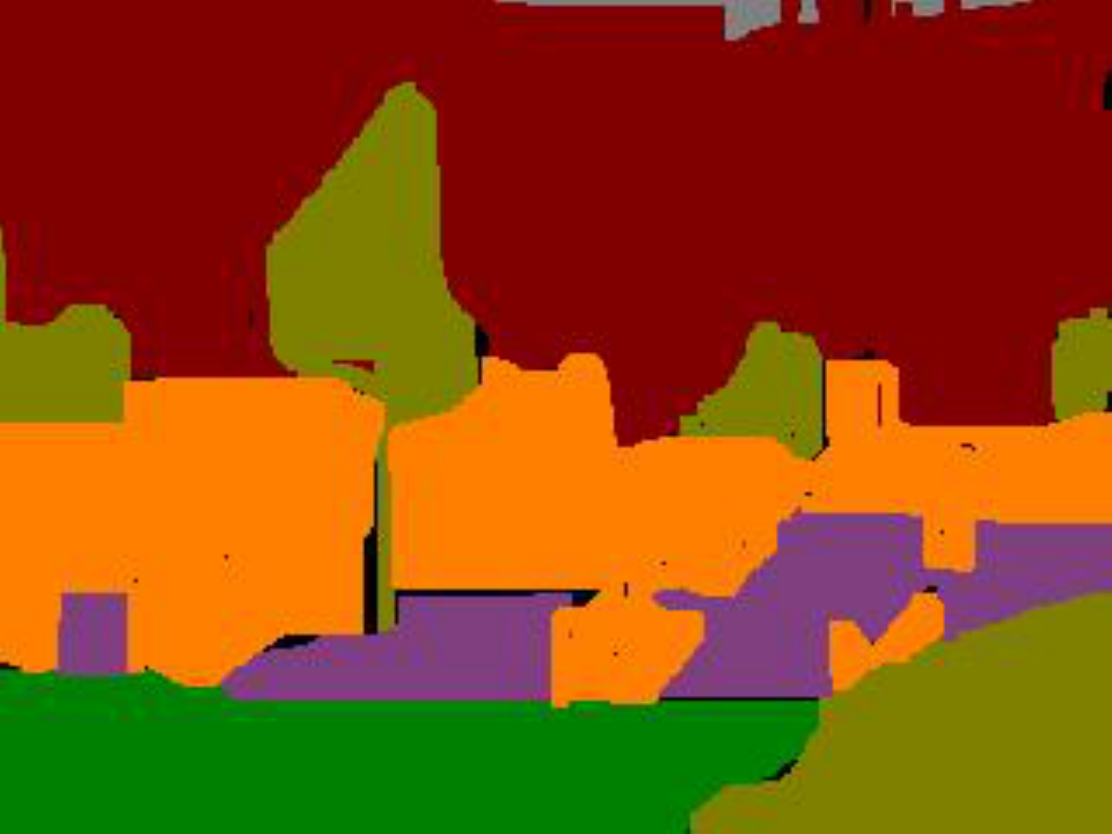}

\captionsetup{labelformat=default}
\setcounter{figure}{2}
   %\caption{AA}
    \caption{Figure that illustrates probabilities and variance of LocPMAP inference method  applied on deeply learned FCN unary features.  Note that while most other inference techniques produce a discrete solution to the problem, our LocPMAP method outputs discrete solution and also probabilities and variance for each pixel. In the the second and third columns, we show probabilities and variance for the ``Tree'' class predictions. The fourth and fifth columns depict the probabilities and the variance of the predictions for the ``Object'' class respectively. In the last column we show images corresponding to the ground truth.\vspace{-0.2cm}}
    \label{fig_prob_var}
\end{figure*}

Figure~\ref{fig_stanford} shows some qualitative results. Note that compared to the results achieved by~\citet{Gould+al:ICCV09}, our predictions are spatially smoother. Similarly, relative to the ICM predictions, Local Perturb-and-MAP yields crispier boundaries around the objects and more coherent segments. We also note that unlike most inference methods that can only predict the discrete label, our method outputs the prediction variance for every pixel in addition to the label (See Figure~\ref{fig_prob_var}). This probabilistic prediction may be useful in practical scenarios, such as for the analysis of failures or when confidence estimates are needed.

%Our method also generates variance for every pixel prediction. We include these visualizations in Figure~\ref{fig_prob_var}. %Not surprisingly, these visualizations demonstrate that the high variance predictions occur around the boundary points of different label regions.

%These qualitative and quantitative results confirm our initial intuition that consistency between learning and inference steps. In comparison to the other inference methods that are disjoint from the learning step, our LPO method achieves consistently higher accuracy and generates predictions that look better qualitatively.

%\textbf{Raw Potentials.}

%\textbf{Deep Potentials.}

\section{Discussion}
We introduced a Local Perturb-and-MAP (LocPMAP) framework 
which yields a novel connection with pseudolikelihood (PL).
Our empirical analysis demonstrates that locPMAP forms a 
better inference procedure for models learned from PL optimization than  existing approximate inference methods. 
%and demonstrate the practical 
%unifies the learning and inference steps with pseudolikelihood learning objective. 
Future work includes extending our method to use larger blocks (corresponding to composite likelihood). Our new perspective on pseudolikelihood may  also be leveraged to solve challenging structure prediction problems in other domains.

{\small
\bibliography{gb_bibliography}
\bibliographystyle{icml2015mine}
}

\end{document}